\title{Bridging Causality, Individual Fairness, and Adversarial Robustness in the Absence of Structural Causal Model}
\author{
	Ahmad-Reza Ehyaei \\
        Max Planck Institute for Intelligent Systems, T\"ubingen AI Center, Tübingen, Germany \\
	\texttt{ahmad.ehyaei@tuebingen.mpg.de} \\
\And
	Golnoosh Farnadi \\
  	McGill University; University of Montréal; MILA , Montréal, Canada \\
	\texttt{farnadig@mila.quebec} 
 \And
	Samira Samadi \\
        Max Planck Institute for Intelligent Systems, T\"ubingen AI Center, Tübingen, Germany \\
	\texttt{ssamadi@tuebingen.mpg.de} 
}
\begin{document}
\newcommand{\variable}[1]{\textsc{#1}}
\newcommand{\ahmad}[1]{{\color{blue} Ahmad: (#1)}}
\newcommand{\ahmadm}[1]{\marginpar{{\small \color{blue}\textbf{#1}}}}
\newcommand{\setareh}[1]{{\color{blue} Setareh: #1}}

\newtheorem{definition}{Definition}
\newtheorem{theorem}{Theorem}
\newtheorem{lemma}{Lemma}
\newtheorem{remark}{Remark}
\newtheorem{proposition}{Proposition}
\newtheorem{assumption}{Assumption}
\newtheorem{example}{Example}
\newtheorem{corollary}{Corollary}
\newtheorem{proof}{Proof}

\newcommand{\V}{\mathbf{V}}
\newcommand{\U}{\mathbf{U}}
\newcommand{\X}{\mathbf{X}}
\newcommand{\Y}{\mathbf{Y}}
\newcommand{\y}{\mathbf{y}}
\newcommand{\Z}{\mathbf{Z}}
\newcommand{\bS}{\mathbf{S}} 
\newcommand{\F}{\mathbf{F}}
\newcommand{\A}{\mathbf{A}}
\newcommand{\R}{\mathbf{R}}
\newcommand{\Pa}{\mathbf{Pa}}
\newcommand{\N}{\mathbf{N}}
\newcommand{\W}{\mathbf{W}}
\newcommand{\sW}{\footnotesize\mathbf{W}}
\newcommand{\bP}{\mathbb{P}}
\newcommand{\bR}{\mathbb{R}}
\newcommand{\bZ}{\mathbb{Z}}
\newcommand{\bF}{\mathbb{F}}
\newcommand{\bE}{\mathbb{E}}
\newcommand{\bv}{\mathbb{V}}

\newcommand{\cM}{\mathcal{M}}
\newcommand{\cN}{\mathcal{N}}
\newcommand{\cI}{\mathcal{I}}
\newcommand{\cJ}{\mathcal{J}}
\newcommand{\cG}{\mathcal{G}}
\newcommand{\cV}{\mathcal{V}}
\newcommand{\cU}{\mathcal{U}}
\newcommand{\cA}{\mathcal{A}}
\newcommand{\cX}{\mathcal{X}}
\newcommand{\cZ}{\mathcal{Z}}
\newcommand{\cS}{\mathcal{S}}
\newcommand{\cR}{\mathcal{R}}
\newcommand{\cP}{\mathcal{P}}
\newcommand{\cY}{\mathcal{Y}}
\newcommand{\cD}{\mathcal{D}}
\newcommand{\cK}{\mathcal{K}}
\newcommand{\cB}{\mathcal{B}}
\newcommand{\cQ}{\mathcal{Q}}


\newcommand{\sI}{\scalebox{.5}{\textbf{I}}}
\newcommand{\sG}{\scalebox{.5}{\textbf{G}}}
\newcommand{\cf}{\scalebox{.4}{\textbf{CF}}}
\newcommand{\CF}{{\textbf{\small CF}}}
\newcommand{\Plus}{\raisebox{.4\height}{\scalebox{.6}{+}}}
\newcommand{\sgn}{\mathrm{sign}}
\newcommand{\norm}[1]{\left\lVert#1\right\rVert}
\newcommand{\si}{\small{\textbf{+=}}}
\newcommand{\hi}{\small{\textbf{:=}}}
\newcommand{\pa}{\small{\textbf{pa}}}
\newcommand{\amax}{\mathrm{argmax}}
\newcommand{\cat}{\mathrm{cat}}
\newcommand{\con}{\mathrm{con}}
\newcommand{\ind}{\perp\!\!\!\!\perp} 
\newcommand{\CP}{\textbf{\tiny CP}}
\newcommand{\PCP}{\textbf{\tiny PCP}}
\newcommand{\CAP}{\textbf{\tiny CAP}}

\makeatletter
\newcommand*\bigcdot{\mathpalette\bigcdot@{.5}}
\newcommand*\bigcdot@[2]{\mathbin{\vcenter{\hbox{\scalebox{#2}{$\m@th#1\bullet$}}}}}
\makeatother
\newcommand{\pluseq}{\mathrel{+}=}
\newcommand{\abs}[1]{\left| #1 \right|}

\newcommand{\colbold}[1]{\textcolor[HTML]{000000}{\textbf{#1}}}

\maketitle


\begin{abstract}
Despite the essential need for comprehensive considerations in responsible AI, factors like robustness, fairness, and causality are often studied in isolation. Adversarial perturbation, used to identify vulnerabilities in models, and individual fairness, aiming for equitable treatment of similar individuals, despite initial differences, both depend on metrics to generate comparable input data instances. 
Previous attempts to define such joint metrics often lack general assumptions about data or structural causal models and were unable to reflect counterfactual proximity. To address this, our paper introduces a causal fair metric formulated based on causal structures encompassing sensitive attributes and protected causal perturbation. To enhance the practicality of our metric, we propose metric learning as a method for metric estimation and deployment in real-world problems in the absence of structural causal models. We also demonstrate the application of our novel metric in classifiers. Empirical evaluation of real-world and synthetic datasets illustrates the effectiveness of our proposed metric in achieving an accurate classifier with fairness, resilience to adversarial perturbations, and a nuanced understanding of causal relationships.
\end{abstract}


\keywords{Structural Causal Model \and Individual Fairness \and Adversarial Robustness \and Metric Learning}

\section{Introduction}
\label{sec:introduction}
The concept of \textit{individual fairness}, as defined by \citet{dwork2012fairness}, together with group fairness, provides crucial frameworks for understanding fairness in machine learning and algorithmic decision-making. This concept focuses on the fair treatment of similar individuals to prevent discrimination based on individual characteristics. The definition of individual fairness, whether through the Lipschitz formulation \cite{dwork2012fairness} or the \(\epsilon - \delta\) method \cite{john2020verifying}, requires the creation and assessment of a \textit{fair metric}. These metrics are essential quantitative tools for evaluating whether algorithms adhere to the principles of individual fairness.

\textit{Adversarial perturbation}, as outlined by \cite{goodfellow2014explaining} and \cite{madry2017towards}, involves the purposeful manipulation of input data to uncover machine learning model vulnerabilities or assess robustness.
This concept is closely related to metrics that measure the impact of changes in input data on model performance. Often, it involves using distance metrics to quantify the differences between original and perturbed inputs.
When dealing with a causal structure underlying data, traditional metrics like the Euclidean norm, fail to account for causal relationships, as noted by ~\citet{kilbertus2017avoiding}. This limitation becomes especially evident when aiming for fair treatment for sensitive attributes. In such scenarios, the most suitable metric would be one that generates minimal values for counterfactual instances associated with each data point.

Previous research frequently simplified counterfactual calculations by only modifying levels of sensitive features. In the work \citet{dominguez2022adversarial}, adversarial perturbations were integrated into structural causal models (SCM), with a primary focus on continuous features, which may have neglected aspects of fairness. On the other hand, ~\citet{ehyaei2023causal} developed a fair metric based on functional causal structure, tailored to safeguard sensitive attributes. Nevertheless, their approach is limited to certain causal structures, and their fairness metric has not yet been underpinned by a well-established set of axioms.

In this study, we aim to bridge the gap between fair metrics in causal structures and sensitive attributes. We begin by identifying suitable properties that align with our objectives and subsequently derive this metric from observational data without knowing the causal structure assumption. 
In $\S$~\ref{sec:fair_metric}, we introduce a definition for a \textit{causal fair metric} applicable to any SCM, effectively addressing both causality and the protection of sensitive attributes.
In Section~\ref{sec:cap}, we use the causal fair metric to create a \textit{protected causal perturbation}, enhancing adversarial perturbation with causality and sensitivity considerations. We also examine its geometric properties and attributes.

Constructing a causal fair metric typically requires knowledge of SCM, which is often unavailable in many real-world applications. To overcome this limitation, we propose to derive the metric from data.
In $\S$~\ref{sec:metric_learning}, we illustrate that relying solely on observational or interventional data is insufficient for learning the causal fair metric. To address the absence of SCMs, an alternative approach involves metric learning using tagged distance data. These tags indicate proximity values or labels indicating data point closeness. By discussing the requirements of other methods, we focus on deep metric learning due to its compatibility with the structure of causal fair metrics. To enhance practicality, we employ contrastive and triplet deep metric learning scenarios.

In $\S$~\ref{sec:CE}, we conduct experiments on synthetic and real-world datasets to empirically validate our theoretical findings. We regard the \textit{Siamese} metric learning method as our baseline across various scenarios. The results indicate that knowing the structure of the causal fair metric improves learning performance in deep metric network design scenarios.
Furthermore, our empirical analysis reveals that label-based metric approaches are more practical in terms of balancing applicability and accuracy and align better with the notion of protected causal perturbation.

To demonstrate our framework's achievements, we apply our empirical causal fair metric to a fairness learning method for classifiers, as suggested by \citet{ehyaei2023causal}. Our approach, unlike theirs which requires knowledge of the causal structure, operates without this information and yields improvements in fairness.
Our main contributions are:
\begin{itemize}
\item \textbf{Causal Fair Metric ($\S$\ref{sec:fair_metric}):} We present a causal fair metric that incorporates both causal considerations and the protection of sensitive attributes. In addition, we demonstrate how our proposed causal fair metric can be embedded in exogenous space.
\item \textbf{Protected Causal Perturbation ($\S$\ref{sec:cap}):} We use our proposed causal fair metric to generate adversarial perturbation within causal structures while addressing fairness concerns.
\item \textbf{Metric Learning ($\S$\ref{sec:metric_learning}):} Theoretically, we show that, without SCM assumptions, learning a fair metric from observational or interventional distributions is not guaranteed. We introduce a learning algorithm designed to extract a causal fair metric from empirical data, with a focus on both causality and fairness considerations.
\item \textbf{Fairness, Robustness and Causality Classifier (\S~\ref{sec:ecapify})}: We assert that causally-aware fair adversarial learning with only observational data, without SCM knowledge is impossible. To address this, we introduce \textbf{ECAPIFY}, which employs metric learning in the absence of SCM.
\end{itemize}

\section{Related Work}
Previous research has aimed to define adversarial perturbation and learn fair metrics. Notably, \citet{ehyaei2023causal} worked on constructing a fair metric in the presence of causal structures and sensitive attributes. However, their metric is limited to an additive noise model and lacks a full characterization of its properties. Their approach inspires our work.
In \citet{mukherjee2020two}, the authors attempted fair metric learning, but their method didn't heavily rely on causal structure. They assumed an embedding into a space where sensitive attributes form a linear subspace but didn't clarify its connection to SCM. Moreover, they assumed knowledge of the embedding map during metric learning.
In \citet{ilvento2020metric}, submetrics were developed for learning metrics for individual fairness using human judgments. Under specific assumptions about point distribution and representative point selection, these submetrics maintained accuracy relative to the true metric. However, this work didn't address the impact of sensitive attributes, which often compromise metric properties.

Spectral-based metric estimation methods, akin to those in \citet{zhang2016distance} and \citet{olson2022algorithmic}, often require specific embedding kernel forms or observations of all pairwise distances $d(v_i, v_j)$ for guaranteed metric convergence.
Fair representation learning \cite{zemel2013learning, mcnamara2017provably, ruoss2020learning} aims to map individuals to prototypes, eliminating protected attributes, while preserving performance-relevant information during training.
Another non-linear metric estimator is the tree-based approach, proposed by \citet{demirovic2021optimal}. They introduced a novel algorithm using bi-objective optimization to compute decision trees that are provably optimal for non-linear metrics.
Online learning algorithms, as in \cite{bechavod2020metric, gillen2018online}, ensure a finite number of fairness constraint violations and bounded regret, relying on some metric-based assumptions.

Various spectral, probabilistic, and deep metric learning methods are discussed in \cite{ghojogh2022spectral, ghojogh2023elements, suarez2021tutorial, francis2021major}. To the best of our knowledge, none of the existing algorithms address the integration of causal structure and sensitive attributes in metric learning.

\section{Background}
\label{sec:background}
\paragraph{Structural Causal Model.}
A SCM for a set of $n$ random variables $\mathbf{V}=\{\mathbf{V}_i\}^{n}_{i =1}$ is represented by the tuple $\mathcal{M} = \langle \mathcal{G}, \mathbf{V}, \mathbf{U}, \bF, \mathbf{P}_{\mathcal{U}} \rangle$~\cite{pearl2009causality}, where:
\begin{itemize}
\setlength\parsep{0em}
\setlength\itemsep{0em}
    \item The set $\bF = \{\mathbf{V}_i := f_i(\mathbf{V}_{\text{Pa}(i)}, \mathbf{U}_i)\}_{i=1}^{n}$ contains \textit{structural equations}, with each equation $f_i$ denoting the causal connection between the endogenous variable $\mathbf{V}_i$, its direct causal parents $\mathbf{V}_{\text{Pa}(i)}$ from $\mathcal{G}$, and an exogenous variable $\mathbf{U}=\{\mathbf{U}_i\}^{n}_{i =1}$ signifying unobservable background influences. In this work, we suppose $\mathcal{G}$ is a directed acyclic graph.
    \item  The distribution $\bP_{\mathbf{U}}$ of exogenous noise variables factorizes, $\bP_{\mathbf{U}} = \prod_{i=1}^{n}\bP_{\mathbf{U}_i}$, due to the assumption of \textit{causal sufficiency}.
\end{itemize}
Under acyclicity, each instance $u \in \cU$ of the exogenous space $\cU$ uniquely determined by $v \in \cV$ with the reduced-form mapping $g: \cU \rightarrow \cV$, where $g$ is obtained by iteratively substituting the structural equations $\bF$ following the causal graph's topological order $\mathcal{G}$.
The SCM entails a unique joint distribution $\bP_X$ over the
endogenous variables through the reduced-form mapping, 
$\bP_{\V} (\V = v) := \bP_{\U} (\U= g^{-1}(v))$
where $g^{-1}$ is the preimage of $g$.
\paragraph{Causal Identifiability.}
Discovering true causal connections among variables solely from observational data typically necessitates additional assumptions about the structural functions $\bF$. One identifiable family of SCMs is the additive noise model (ANM)~\cite{hoyer2009nonlinear}, represented by $\mathbf{V} = f(\mathbf{V}) + \mathbf{U}$. In ANMs, obtaining the relationship from $u$ to $v$ is straightforward when considering $I$ as the identity function ($I(v) = v$), then $g$ is obtained by $g = (I-f)^{-1}$. Post-nonlinear models \cite{zhang2012identifiability} and location-scale noise models \cite{immer2023identifiability} are another identifiable SCM families.
\paragraph{Interventions.}
SCMs facilitate modeling and assessing the impact of external manipulation on the system represented by the intervention~\cite{peters2017elements}. Two main intervention types are \textit{hard interventions} and \textit{soft interventions}. 
In hard interventions (expressed as $\mathcal{M}^{do(\mathbf{V}_{\mathcal{I}} := \theta)})$, a subset $\mathcal{I}\subseteq \{1, \dots, n\}$ of features $\mathbf{V}_{\mathcal{I}}$ is forcibly fixed to a constant $\theta\in\mathbb{R}^{|\mathcal{I}|}$ by excluding relevant parts of the structural equations:
\begin{equation*}
\label{eq:hard}
\bF^{do(\V_{\cI} \hi \theta)}= 
\begin{cases}
\textbf{V}_i := \theta_i & \text{if} \ i \in \cI \\
\textbf{V}_i := f_i (\V_{\Pa(i)}, \U_i )& \text{otherwise}
\end{cases}
\end{equation*}
Hard interventions disrupt the causal connections between affected variables and their ancestral components in the causal graph, whereas soft interventions maintain all causal relationships while adjusting the structural equation functions.
For example, \textit{additive (shift) intervention} \cite{eberhardt2007interventions}, denoted as $\mathcal{M}^{do(\mathbf{V}_{\mathcal{I}} \si \delta)}$,
modify features $\mathbf{V}$ using a perturbation vector $\delta\in\mathbb{R}^{n}$ with equations
$\left\{V_i := f_i\left(\V_{\Pa(i)}, \U_i\right)+\delta_i\right\}_{i=1}^{n}$.
\paragraph{Counterfactuals.}
 \textit{Counterfactual} is a hypothetical scenario that represents what would have happened if certain interventions or changes were applied to the variables in the SCM.
The counterfactual outcome $\CF(v, \theta)$ for a specific variable $\mathbf{V}_{\cI}$ under the hard intervention $do(\mathbf{V}_{\cI} := \theta)$ can be computed using the modified structural equations as $g^{\theta}(g^{-1}(v))$, where $g^{\theta}$ represents the altered reduced-form mapping $\mathcal{M}^{do(\mathbf{V}_{\cI} := \theta)}$ after the intervention.
\paragraph{Sensitive Attribute.}
A sensitive attribute, like race, holds ethical or legal significance in decision-making, such as in hiring, lending, or criminal justice, determining equitable treatment or outcomes for individuals or groups. Let $\mathbf{S} \in \{\mathbf{V}_1,\dots,\mathbf{V}_n\}$ represent a sensitive attribute with domain  $\mathcal{S}$ (discrete or continuous). For each instance $v \in\mathcal{V}$, the set of \textit{counterfactual twins} regarding the sensitive feature $\mathbf{S}$ is obtained by $\ddot{\mathbf{v}} = \{ \ddot{v}_s = \text{CF}(v, s) : s \in \mathcal{S} \}$.
\paragraph{Individual Fairness.}
\textit{Individual fairness}, as introduced by \citet{dwork2012fairness}, ensures equitable treatment for individuals with comparable predefined metric similarities. Two formulations, including the Lipschitz mapping-based formulation \cite{dwork2012fairness}:
\begin{equation*}
 d_{\cY}(h(v), h(v')) \leq L\ d_{\cV}(v, v')   \quad \forall v, v' \in \cV
\end{equation*}
and the $\epsilon$-$\delta$ formulation~\cite{john2020verifying}:
 \begin{equation*}
 \forall v, v' \in \cV  \quad d_{\cV}(v, v') \leq \delta \quad \Longrightarrow \quad d_{\cY}(h(v), h(v')) \leq   \epsilon 
\end{equation*}
have been proposed. Where, $d_{\cX}$ and $d_{\cY}$ are metrics for the input and output spaces, respectively, with $h$ as the classifier and $L \in \mathbb{R_+}$.
The essence of the definition is centered around the \textit{fair metric} $d_{\cX}$, which measures individual similarity based on relevant attributes.

\textit{Counterfactual fairness}, as introduced by \citet{kusner2017counterfactual}, defines fairness using causal models. This approach compares an individual's actual outcomes with hypothetical outcomes in a scenario where sensitive features differ. A classifier $h$ is deemed counterfactually fair if it satisfies the following condition,
$h(\ddot{v}_s) = h(\ddot{v}_{s'}) \quad \forall s,s' \in \mathcal{S}$.

\section{Fair Metric}
\label{sec:fair_metric}
The fair metric, frequently employed in previous studies, becomes ambiguous when applied to problems involving causal structures and sensitive attributes~\cite{ghojogh2022spectral}. To combine a causal structure with a sensitive attribute, we need a dissimilarity function that stays zero for counterfactual twins and is stable against small changes in non-sensitive attributes. Creating counterfactual twins is straightforward, but defining small changes for non-sensitive features requires further exploration.
In \citet{dominguez2022adversarial} and \citet{ehyaei2023robustness}, additive interventions are used as perturbations in additive noise models. In such cases, adding noise is straightforwardly interpreted as perturbing the noise. However, this approach does not translate directly to arbitrary SCM. For clearer interpretation, we use soft interventions to alter exogenous variable values, instead of additive interventions.
\begin{figure*}
    \centering
    \includegraphics[width=\textwidth]{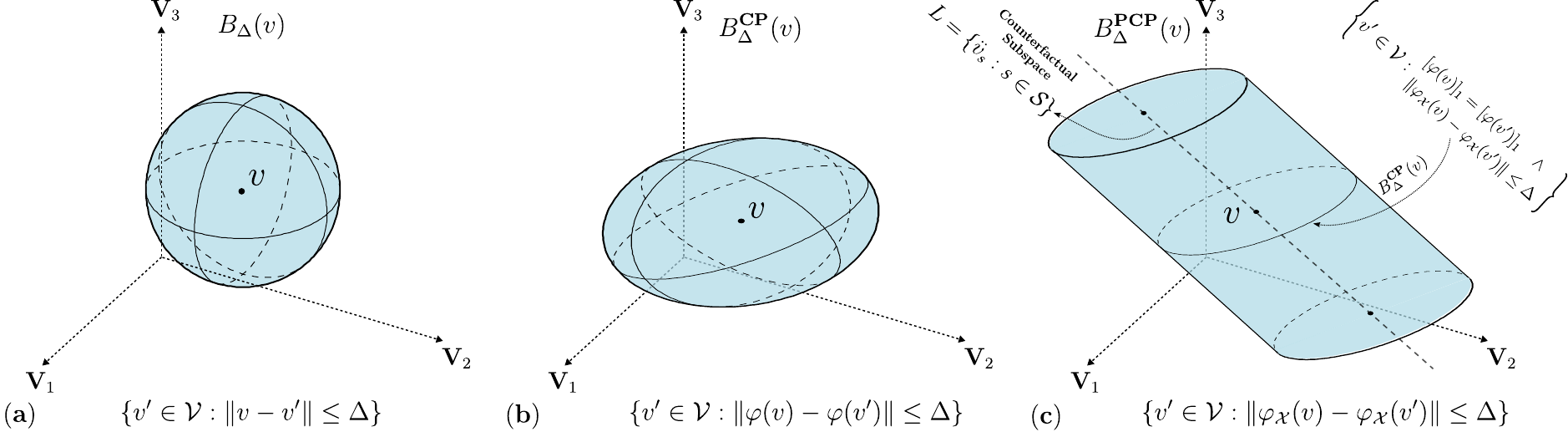}
    \caption{\footnotesize{illustrates the progression from a basic perturbation to a protected causal perturbation ball. Consider a simple linear SCM with Euclidean norm in both exogenous and endogenous spaces. (a) a perturbation ball that does not account for causality or the protection of sensitive features, (b) a perturbation ball that includes causality but assumes the absence of sensitive features, and (c) The counterfactual perturbation space, created using a counterfactual space based on a sensitive attribute, can be visualized as the axis $L$ of a cylinder. Surrounding ellipses represent a causal perturbation ball $B^{\CP}_{\Delta}$, encompassing perturbations of non-sensitive variables with radii $\Delta$.}}
    \label{fig:CPB}
\end{figure*}

\begin{definition}[\textbf{Additive Noise Intervention}]
In additive noise intervention, $\delta$ is added to the noise term to manipulate the SCM:
\begin{equation*}
\bF^{do(\V_{\cI} \si \delta)} = \left\{V_i := f_i\left(\V_{\Pa(i)}, \U_i + \delta_i\right)\right\}_{i=1}^{n}
\end{equation*}
The counterfactual $\text{CF}(v, \delta)$ for additive noise interventions is defined under the intervention $do(\V_{\cI} \si \delta)$.
\end{definition}
Through the aforementioned intervention, we establish the definition of a causal fair metric.
\begin{definition}[\textbf{Causal Fair Metric}]
\label{def:cfm}
Let $d: \cV \times \cV \rightarrow \mathbb{R}_{_{\geq 0}}$ represent a metric~\footnote{The concept of a causal metric does not adhere to its rigorous mathematical definition, as it lacks the positivity property inherent in distance metrics.} defined on the feature space $\cV$, generated by a SCM $\cM$. Let $\bS$ denote a sensitive attribute, and $\cI$ represent the index set of sensitive features within the SCM.
The metric is called a causal fair metric if it adheres to the following properties:
 \begin{itemize}
 \setlength\parsep{0em}
\setlength\itemsep{0em}
\item For all $v \in \cV$ and $s \in \cS$, the metric is zero only for twin pairs, i.e., $d(v,\ddot{v}_s) = 0$.
\item \( d(v,\text{CF}(v, \delta)) \) is continuous respect ton non-sensitive part perturbation, i.e., $\text{CF}(v, \delta)$ refers to an additive noise with $\delta \in \mathbb{R}^n$ such that $\delta_{|_{\cI}} = 0$
\end{itemize}
\end{definition}
The first property highlights that a fair metric maintains counterfactual fairness, meaning the distance between an instance and its counterfactual is zero. The second property ensures the metric shows local causality, suitable for adversarial perturbations and gradient-based computations.

Constructing a metric on $\mathcal{M}$ is challenging because it must consider causal relationships in its dissimilarity function. By using the causal sufficiency principle, which ensures feature independence in the exogenous space, we can define individual similarity functions for each feature's noise. These individual metrics enable us to construct a holistic metric in the noise space, which is then extended to the feature space through a push-forward metric, i.e., $d_{\cV}(v,v') = d_{\cU}(g^{-1}(v), g^{-1}(v'))$. However, while this approach is straightforward, it falls short in measuring distances between an instance $v$ and its counterfactual twins, as the following example demonstrates.
\begin{example}
Consider the SCM denoted as $\mathcal{M}$ with the following structural equations:
\begin{equation*}
\mathbb{F} = 
\begin{cases}
V_1 := 2*(U_1 - 0.5), & 	U_1\sim \mathcal{B}(0.5)	\\
V_2 := V_1*U_2, & 			U_2 \sim \mathcal{B}(0.5) 
\end{cases}
\end{equation*}
where $\mathcal{B}(p)$ represents the Bernoulli distribution.
For $\mathcal{M}$ the reduced-form mappings $g$ and $g^{-1}$ as follows: $g(u_1, u_2) = (2(u_1-0.5), 2(u_1-0.5) \cdot u_2)$ and $g^{-1}(v_1, v_2) = (0.5 \cdot v_1 + 0.5, \frac{v_2}{v_1})$, respectively.
To establish a metric on the exogenous space, we employ the Euclidean norm. Through a pull-back procedure, we define the metric on the exogenous space as $d_{\mathcal{V}}(v_1, v_2) = \| g^{-1}(v_1) - g^{-1}(v_2) \|_2$.
The feature space comprises the elements $\cV = \{ (-1, 0), (-1, -1), (1, 0), (1, 1) \}$.
Taking \(V_2\) as a sensitive attribute, it encompasses values of \(-1\), \(0\), and \(1\). To fulfill property \((i)\), we must evaluate \(d_{\mathcal{V}}(v, \ddot{v}_s)\). Let's examine the case where \(v = (1, 1)\) and its counterfactual twin \(\ddot{v}_{-1} = (1, -1)\).
In this case, it's clear that the point $(1, -1)$ falls outside the feature space, indicating that the use of the pull-back approach for fair metric construction does not consistently yield viable outcomes.
\end{example}
The exogenous space alone is not sufficient for designing a metric that preserves causal fair metric properties. To tackle the issue, we embed the feature space into a semi-latent space introduced by \citet{ehyaei2023causal} to better handle counterfactuals and interventions while considering sensitive attributes.

\begin{definition}[\textbf{Semi-latent Space} \cite{ehyaei2023causal}]
Consider SCM $\mathcal{M}$ with sensitive features indexed by $\cI$. We define the semi-latent space $\cQ$ as a combination of observed sensitive features $\V_i$ with distribution $\bP_{\V_i}$ where $i \in \cI$, and latent variables $\U_j$ for other features with distribution $\bP_{\U_j}$. 
Let $v = (v_1, v_2, \dots, v_n)$ be an instance in the observed space and $u = (u_1, u_2, \dots, u_n) = g^{-1}(v)$ be the corresponding instance in the latent space. The mapping $\varphi: \cV \rightarrow \cQ$ transforms $v$ to the semi-latent space $q = (q_1, q_2, \dots, q_n) = \varphi(v)$, where $q_i$ is defined as follows:
\begin{equation*}
    q_i\coloneqq
    \begin{cases}
        v_i & i \in \cI \\
        u_i & i \notin \cI
    \end{cases}
\end{equation*}
The inverse function $v = \varphi^{-1}(q)$ is determined as follows:
\begin{equation*}
    v_i\coloneqq
    \begin{cases}
        q_i & i \in \cI \\
        f_i(v_{\pa(i)}) + q_i & i \notin \cI
    \end{cases}
\end{equation*}
\end{definition}

The metric construction in the semi-latent space is simpler compared to the feature space due to the independence of its components. This independence arises from the sufficiency assumption for $\U_i$ and the intervention assumption for $X_{\cI}$ in the SCM.
Let $(\cQ_{i}, d_i)$ represent the metric space for the semi-latent space. We can define the dissimilarity function for all $Q_i$ using a product metric, similar to the Euclidean example i.e., 
$d(x,y) = \sqrt{\sum_{i=1}^n d_i(x_i,y_i)^2}$.
We aim to ascertain the specific formulations of the causal fair metric, as delineated in Definition ~\ref{def:cfm}.
Before that, we consider the decomposition of semi-latent space $\cQ = \cS \times \cX$ to sensitive subspace $\cS$ and non-sensitive subspace of exogenous space that is denoted by $\cX$.
\begin{proposition}
\label{pr:metric}
Let $d: \cV \times \cV \rightarrow \mathbb{R}$ be a causal fair metric, then $d$ can be written as a form:
\begin{equation}
\label{eq:metric}
d(v,v') =  d_{\cX}(\varphi_{\cX}(v), \varphi_{\cX}(v')) ,
\end{equation}
where $\varphi_{\cX}(v) = P_{\cX}(\varphi(v))$, $\varphi$ is the mapping from feature space to semi-latent space, $P_{\cX}$ is a projection on the non-sensitive subspace of exogenous space, and  $d_{\cX}$ represents the metric defined on the non-sensitive subspace $\cX$, which exhibits continuity along its diagonal with each of its components.
\end{proposition}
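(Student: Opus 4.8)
The plan is to show that any causal fair metric must (i) be insensitive to the sensitive coordinates and (ii) factor through the projection $\varphi_{\cX}$, and then to extract the continuity-along-the-diagonal statement from the second defining property in Definition~\ref{def:cfm}. First I would pass from the feature space to the semi-latent space via $\varphi$: since $\varphi$ is a bijection, the metric $d$ on $\cV$ corresponds to a metric $\tilde d$ on $\cQ = \cS \times \cX$ defined by $\tilde d(q,q') = d(\varphi^{-1}(q), \varphi^{-1}(q'))$, so it suffices to prove $\tilde d(q,q') = d_{\cX}(P_{\cX} q, P_{\cX} q')$ for some $d_{\cX}$ on $\cX$. The key observation is that for a fixed non-sensitive part $x \in \cX$, the semi-latent points $(s,x)$ and $(s',x)$ with $s \ne s'$ are exactly counterfactual twins in the feature space: changing only the sensitive coordinate in $\cQ$ realizes $\CF(v,\cdot)$ because, by the inverse map $\varphi^{-1}$, the exogenous noises of the non-sensitive features are held fixed. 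Hence property (i) of Definition~\ref{def:cfm} gives $\tilde d((s,x),(s',x)) = 0$ for all $s,s'$.

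Next I would use the triangle inequality (available since $d$, hence $\tilde d$, is a pseudmetric — positivity is explicitly waived in the footnote) together with this vanishing to conclude that $\tilde d$ depends only on the non-sensitive parts. Concretely, for any $q=(s,x)$ and $q'=(s',x')$,
\begin{equation*}
\tilde d((s,x),(s',x')) \le \tilde d((s,x),(s',x)) + \tilde d((s',x),(s',x')) = \tilde d((s',x),(s',x')),
\end{equation*}
and symmetrically the reverse inequality holds, so $\tilde d((s,x),(s',x'))$ is independent of the choice of $s, s'$; call its common value $d_{\cX}(x,x')$. One then checks $d_{\cX}$ is a pseudometric on $\cX$ (symmetry and triangle inequality descend immediately from those of $\tilde d$), which yields \eqref{eq:metric} with $\varphi_{\cX} = P_{\cX}\circ\varphi$.

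Finally, for the continuity claim: property (ii) of Definition~\ref{def:cfm} says $\delta \mapsto d(v,\CF(v,\delta))$ is continuous for additive-noise perturbations supported on the non-sensitive indices. Under $\varphi$, such a perturbation moves $q = \varphi(v)$ to $q + (0,\delta_{\cX})$ in the semi-latent coordinates (the sensitive block is untouched, the exogenous block is shifted by $\delta$), so continuity of $\delta \mapsto d(v,\CF(v,\delta))$ translates to continuity of $\delta_{\cX} \mapsto d_{\cX}(x, x + \delta_{\cX})$ at $\delta_{\cX} = 0$, i.e. continuity of $d_{\cX}$ along the diagonal. To get the ``with each of its components'' refinement I would restrict $\delta$ to a single non-sensitive coordinate direction, which yields componentwise diagonal continuity. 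I expect the main obstacle to be bookkeeping the identification between additive-noise interventions in $\cV$ and translations in the semi-latent coordinate $\cX$ — one must verify carefully, using the explicit form of $\varphi^{-1}$ and the topological-order substitution defining $g$, that shifting $\U_i$ by $\delta_i$ for $i\notin\cI$ is exactly the semi-latent translation $q_{\cX}\mapsto q_{\cX}+\delta_{\cX}$, and that no sensitive coordinate is disturbed; everything else is a routine consequence of the pseudometric axioms.
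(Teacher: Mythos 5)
Your proposal is correct and follows essentially the same route as the paper's proof: pull the metric back to the semi-latent space $\cQ = \cS \times \cX$, use property (i) of Definition~\ref{def:cfm} (twins are at distance zero) together with the triangle inequality to show the pulled-back metric is invariant under the sensitive coordinates, define $d_{\cX}$ as the induced value on $\cX$, and read off diagonal continuity from property (ii). The only difference is that you explicitly flag the need to verify that additive-noise interventions on non-sensitive indices correspond to translations of the $\cX$-block in semi-latent coordinates, a bookkeeping step the paper leaves implicit.
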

By aiding the proposition, when the semi-latent space metric is defined by an inner product, the metric takes the well-known form of a kernelized Mahalanobis distance:
\begin{equation}
\label{eq:mahalanobis}
d(v,v')  = \langle (\varphi(v) - \varphi(v')), \Sigma (\varphi(v) - \varphi(v')) \rangle,
\end{equation}
where $\Sigma$ is the projection matrix on non-sensitive exogenous space.

\section{Protected Causal Perturbation}
\label{sec:cap}
An adversarial perturbation ball is key in machine learning, representing a region in the input space where data changes still fall within the same category for the model. This concept evaluates the model's sensitivity to input alterations, especially under adversarial attacks designed to mislead it. Metrics are crucial in quantifying perturbations by gauging the distance between original and altered data. In this section, we extend this concept by applying fair causal metrics to define causal perturbations.
\begin{definition}[\textbf{Protected Causal Perturbation}]
\label{def:cap}
Consider an SCM $\mathcal{M}$ that includes sensitive attributes, and let $d$ represent its causal fair metric. We define the protected causal perturbation (PCP) ball with radius $\Delta$ for an instance $v$ as follows:
\begin{equation}
\label{eq:PCP}
B^{\PCP}_{\Delta}(v) = \{ v' \in \mathcal{V} :  d(v,v')\leq \Delta\},
\end{equation}
where $\Delta$ is a non-negative real number.
\end{definition}
We will examine how the shape of the perturbation ball changes when we add causal structures and protect sensitive features. Fig.~\ref{fig:CPB} shows how the counterfactual ball evolves with these aspects. We define a closed ball $B_{\Delta}^{\cX}$ in space $\cX$ as $B_{\Delta}^{\cX}(x)=\{x' \in \cX:d_{\cX}(x,x') \leq \Delta\}$. Equation \ref{eq:PCP} gives a simple formula: $B^{\PCP}_{\Delta}(v) = \varphi_{\cX}^{-1}(B_{\Delta}^{\cX}(\varphi(v)))$. But since $\varphi_{\cX}$ is not bijective (because projection function $P_{\cX}$ is not bijective) when sensitive features are present, $B^{\PCP}_{\Delta}$ and $B_{\Delta}^{\cX}$ are not isomorphic. Define $B^{\CP}_{\Delta}$ as the part of $B^{\PCP}_{\Delta}$ that only includes the causal structure, leaving out the sensitive protected attributes, i.e., $B^{\CP}_{\Delta}(v) = \{ v' \in \mathcal{V} :  P_{\cX}^{\perp}(\varphi(v)) =  P_{\cX}^{\perp}(\varphi(v')) \ \land \ \varphi_{\cX}(v') \in B_{\Delta}^{\cX}(\varphi_{\cX}(v)\}$. We see that $B^{\CP}_{\Delta}$ is isomorphic to $B_{\Delta}^{\cX}$. Thus, $B^{\PCP}_{\Delta}$ is formed by combining causal balls around each counterfactual instances of $v$.

\begin{proposition}
\label{pr:ballShape}
Let $B^{\PCP}_{\Delta}(v)$ represent the PCP ball around the instance $v$ with a radius of $\Delta$. It can be decomposed as:
\begin{equation}
\label{eq:decompose}
B^{\PCP}_{\Delta}(v) = \bigcup_{s \in \mathcal{S}} B^{\CP}_{\Delta}(\ddot{v}_s),
\end{equation}
where $\mathcal{S}$ represents the level set of sensitive features (which may be continuous or discrete).
$B^{\PCP}_{\Delta}$ exhibits invariance under twins, meaning that for all $s \in \mathcal{S}$, we have $B^{\PCP}_{\Delta}(v) = B^{\PCP}_{\Delta}(\ddot{v}_s)$.
\end{proposition}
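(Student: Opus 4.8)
The plan is to exploit the representation of the causal fair metric established in Proposition~\ref{pr:metric}, namely $d(v,v') = d_{\cX}(\varphi_{\cX}(v),\varphi_{\cX}(v'))$, together with the structure of the semi-latent space $\cQ = \cS \times \cX$. The key observation, which I would state first, is that all counterfactual twins $\ddot v_s = \CF(v,s)$ of a fixed $v$ share the same non-sensitive exogenous coordinates: applying the hard intervention $do(\bS := s)$ only overwrites the sensitive component of $q = \varphi(v)$ and leaves the latent coordinates $u_j$, $j \notin \cI$, untouched. Hence $\varphi_{\cX}(\ddot v_s) = P_{\cX}(\varphi(\ddot v_s)) = P_{\cX}(\varphi(v)) = \varphi_{\cX}(v)$ for every $s \in \cS$, and dually the twins are exactly the points of $\cV$ whose $\cX$-projection equals $\varphi_{\cX}(v)$ while the $P_{\cX}^{\perp}$ (sensitive) coordinate ranges over $\cS$. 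This immediately gives $d(v, \ddot v_s) = d_{\cX}(\varphi_{\cX}(v), \varphi_{\cX}(v)) = 0$, consistent with property (i) of Definition~\ref{def:cfm}, and it is the engine behind both claims.

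For the decomposition~\eqref{eq:decompose}, I would argue by double inclusion. Take $v' \in B^{\PCP}_{\Delta}(v)$, so $d_{\cX}(\varphi_{\cX}(v),\varphi_{\cX}(v')) \le \Delta$. Let $s \in \cS$ be the sensitive level of $v'$, i.e. $P_{\cX}^{\perp}(\varphi(v')) = s$. Since $P_{\cX}^{\perp}(\varphi(\ddot v_s)) = s$ as well and $\varphi_{\cX}(\ddot v_s) = \varphi_{\cX}(v)$, the point $v'$ satisfies both defining conditions of $B^{\CP}_{\Delta}(\ddot v_s)$: the sensitive coordinates of $v'$ and $\ddot v_s$ agree, and $\varphi_{\cX}(v') \in B^{\cX}_{\Delta}(\varphi_{\cX}(\ddot v_s)) = B^{\cX}_{\Delta}(\varphi_{\cX}(v))$. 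Conversely, if $v' \in B^{\CP}_{\Delta}(\ddot v_s)$ for some $s$, then $\varphi_{\cX}(v') \in B^{\cX}_{\Delta}(\varphi_{\cX}(\ddot v_s)) = B^{\cX}_{\Delta}(\varphi_{\cX}(v))$, which by the Proposition~\ref{pr:metric} representation is precisely $d(v,v') \le \Delta$, so $v' \in B^{\PCP}_{\Delta}(v)$. This establishes~\eqref{eq:decompose}.

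Twin-invariance then follows almost for free: for any fixed $s_0 \in \cS$, the twins of $\ddot v_{s_0}$ coincide as a set with the twins of $v$ (both equal $\ddot{\mathbf v}$, since $\CF(\CF(v,s_0), s) = \CF(v,s)$ — overwriting the sensitive coordinate twice is the same as overwriting it once), and $\varphi_{\cX}(\ddot v_{s_0}) = \varphi_{\cX}(v)$. Plugging these two equalities into the decomposition~\eqref{eq:decompose} applied to $\ddot v_{s_0}$ gives $B^{\PCP}_{\Delta}(\ddot v_{s_0}) = \bigcup_{s \in \cS} B^{\CP}_{\Delta}(\ddot{(\ddot v_{s_0})}_s) = \bigcup_{s \in \cS} B^{\CP}_{\Delta}(\ddot v_s) = B^{\PCP}_{\Delta}(v)$.

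The routine parts are the set-theoretic inclusions; the one place that needs care — and which I would flag as the main obstacle — is making the "twins differ only in the sensitive coordinate of the semi-latent representation" claim fully rigorous, i.e. verifying that the hard intervention $do(\bS := s)$ acts on $\cQ$ exactly as replacement of the $\cS$-component while fixing the $\cX$-component, and that the composition identity $\CF(\CF(v,s_0),s) = \CF(v,s)$ holds. This requires unwinding the definition of $\varphi, \varphi^{-1}$ and checking that recomputing the downstream endogenous variables via $v_i := f_i(v_{\pa(i)}) + q_i$ does not disturb the latent coordinates $q_j = u_j$ for $j \notin \cI$ — true because those coordinates are stored directly in $\cQ$ and are not functions of $\bS$. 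Once that structural fact is in hand, everything else is bookkeeping. I would also note in passing that when $\cS$ is a single point the statement degenerates to $B^{\PCP}_{\Delta} = B^{\CP}_{\Delta}$, matching panel (b) of Fig.~\ref{fig:CPB}, which is a useful sanity check.
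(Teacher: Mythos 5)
Your proposal is correct and follows essentially the same route as the paper's proof: both rest on the observation that in the semi-latent space the twins $\ddot v_s$ share the $\cX$-coordinate of $v$ (so $\varphi_{\cX}(\ddot v_s)=\varphi_{\cX}(v)$), and both obtain the decomposition by partitioning $B^{\PCP}_{\Delta}(v)$ according to the sensitive level $P_{\cX}^{\perp}(\varphi(v'))$ of its elements; your double-inclusion is just the paper's chain of set equalities unrolled. The only difference is that you explicitly verify the twin-invariance claim via $\CF(\CF(v,s_0),s)=\CF(v,s)$, which the paper's written proof leaves implicit.
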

The PCP definition, along with the causal fair metric property, captures the counterfactual proximity definition. The subsequent lemma demonstrates that a PCP with a diameter of $0$ represents the set of twins.
\begin{proposition}
\label{pr:twinDef}
Let $\bS$ denote the protected features, and let $d$ be the causal fair metric. The set of counterfactual twins corresponds to the PCP with a zero radius i.e.,
$\ddot{\mathbb{V}} = \lim_{\Delta \rightarrow 0} B^{\PCP}_{\Delta}(v)$.
\end{proposition}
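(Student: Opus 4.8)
The plan is to unfold the definitions and reduce the claim to a statement about the metric space $(\cX, d_{\cX})$. By Proposition~\ref{pr:ballShape}, $B^{\PCP}_{\Delta}(v) = \bigcup_{s \in \cS} B^{\CP}_{\Delta}(\ddot v_s)$, and each $B^{\CP}_{\Delta}(\ddot v_s)$ is isomorphic (via $\varphi_{\cX}$, restricted to the fibre where $P_{\cX}^{\perp}\circ\varphi$ is constant) to the closed ball $B_{\Delta}^{\cX}(\varphi_{\cX}(\ddot v_s))$ in $\cX$. So I would first argue that $\lim_{\Delta\to 0} B_{\Delta}^{\cX}(x) = \{x\}$: this is the standard fact that in a metric space $\bigcap_{\Delta>0} \{x' : d_{\cX}(x,x')\le \Delta\} = \{x' : d_{\cX}(x,x')=0\} = \{x\}$, which needs the positivity of $d_{\cX}$ on $\cX$ (note $d_{\cX}$, unlike $d$ on $\cV$, \emph{is} a genuine metric — the loss of positivity for $d$ came precisely from projecting out sensitive coordinates). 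Pulling this back through the isomorphism, $\lim_{\Delta\to 0} B^{\CP}_{\Delta}(\ddot v_s)$ is the fibre of $\varphi_{\cX}$ over $\varphi_{\cX}(\ddot v_s)$, i.e. $\{v' : \varphi_{\cX}(v') = \varphi_{\cX}(\ddot v_s)\}$.

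Next I would identify that fibre. Since $\varphi_{\cX}(v') = P_{\cX}(\varphi(v'))$ keeps only the non-sensitive exogenous coordinates, $\varphi_{\cX}(v') = \varphi_{\cX}(\ddot v_s)$ says exactly that $v'$ and $\ddot v_s$ share the same non-sensitive noise values while the sensitive coordinates are free to range over $\cS$. But $\ddot v_s = \CF(v,s)$ is itself obtained from $v$ by the hard intervention fixing $\bS := s$, which (for $s$ ranging over $\cS$) changes only the sensitive coordinates in the semi-latent representation and leaves $P_{\cX}\circ\varphi$ invariant; hence $\varphi_{\cX}(\ddot v_s) = \varphi_{\cX}(v)$ for every $s$, and the fibre over $\varphi_{\cX}(\ddot v_s)$ is the same set for all $s$, namely $\{v' : \varphi_{\cX}(v') = \varphi_{\cX}(v)\}$. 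This set is precisely the collection of counterfactual twins $\ddot{\mathbb{V}} = \{\ddot v_s : s \in \cS\}$ of $v$: every twin lies in it by the previous sentence, and conversely any $v'$ with the same non-sensitive noise as $v$ is the counterfactual of $v$ under $do(\bS := v'_{\cI})$, hence a twin. Taking the union over $s\in\cS$ in Proposition~\ref{pr:ballShape} then collapses to this single fibre, giving $\lim_{\Delta\to 0} B^{\PCP}_{\Delta}(v) = \ddot{\mathbb{V}}$.

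I would also make the meaning of ``$\lim_{\Delta\to 0}$'' precise — interpreting it as the nested intersection $\bigcap_{\Delta>0} B^{\PCP}_{\Delta}(v)$, which is well-defined since the balls are monotone decreasing in $\Delta$ — and note that commuting this intersection with the (possibly infinite) union over $s$ requires a small argument: $\supseteq$ is immediate since each twin lies in every ball (by property (i) of the causal fair metric, $d(v,\ddot v_s)=0\le\Delta$), and for $\subseteq$ one uses that any $v'$ in every $B^{\PCP}_{\Delta}(v)$ has $d(v,v')=0$, which by Proposition~\ref{pr:metric} means $d_{\cX}(\varphi_{\cX}(v),\varphi_{\cX}(v'))=0$, i.e. $\varphi_{\cX}(v')=\varphi_{\cX}(v)$, putting $v'$ in the fibre $=\ddot{\mathbb{V}}$.

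The main obstacle I anticipate is the bookkeeping around non-injectivity: $\varphi_{\cX}$ is not a bijection, so ``the limit of the balls'' is a whole fibre rather than a point, and one must be careful that this fibre genuinely coincides with the twin set $\ddot{\mathbb{V}}$ and not something larger — this hinges on the structural fact (inherited from the semi-latent space construction of \citet{ehyaei2023causal}) that varying the sensitive coordinates over $\cS$ in the semi-latent representation traces out exactly the counterfactual twins and nothing else. The metric-space limit step and the use of positivity of $d_{\cX}$ are routine by comparison.
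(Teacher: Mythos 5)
Your proposal is correct and follows essentially the same route as the paper: decompose $B^{\PCP}_{\Delta}(v)$ via Proposition~\ref{pr:ballShape}, show that the ball $B_{\Delta}^{\cX}$ collapses to a single point using the positivity of $d_{\cX}$ (which, exactly as in the paper's auxiliary lemma, is forced by property (i) of the causal fair metric — a point you assert slightly more casually but also re-derive in your third paragraph), and identify the resulting fibre of $\varphi_{\cX}$ with the twin set. If anything you are more careful than the paper, which interprets $\lim_{\Delta\to 0}$ informally and silently interchanges it with the union over $s\in\cS$, whereas you pin the limit down as a nested intersection and justify the interchange.
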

\section{Fair Metric Learning}
\label{sec:metric_learning}
From Eq.~\ref{eq:metric}, we can create a fair metric using structural equations and a metric for non-sensitive exogenous variables. This involves deriving the metric from data and dealing with unknowns such as sensitive features, the embedding function $\varphi$, and the metric for non-sensitive exogenous features. Assuming we know the sensitive features and have dissimilarity functions for each exogenous component from domain experts.
With these assumptions, understanding the functional structures allows us to construct $\varphi$ and, in turn, develop a causal fair metric. This metric is fundamentally linked to counterfactuals, raising the critical question of whether it's possible to estimate counterfactuals from observational data.
The below example that is adapted from \citealp[$\S$ 6.19]{peters2017elements} investigates the possibility of this idea.
\begin{example}
\label{ex:lack}
Let $\cM_{A}$ and $\cM_{B}$ be two SCM with below structural equations respectively: 
\begin{equation*}
\cM_{A} = \begin{cases}
V_1 := U_1, \\
V_2 := V_1(1-U_2), \\
V_3 := \mathbb{I}_{V_1\neq V_2}(\mathbb{I}_{U_3>0}V_1+\mathbb{I}_{U_3=0}V_2) + \mathbb{I}_{V_1=V_2}U_3
\end{cases}
\end{equation*}
\begin{equation*}
\cM_{B} = 
\begin{cases}
V_1 := U_1, \\
V_2 := V_1(1-U_2), \\
V_3 := \mathbb{I}_{V_1\neq V_2}(\mathbb{I}_{U_3>0}V_1+\mathbb{I}_{U_3=0}V_2) + \\ \quad \quad \ \ \ \mathbb{I}_{V_1=V_2}(N-U_3)
\end{cases}
\end{equation*}
where, $U_1$ and $U_2$ have a Bernoulli distribution with a $0.5$ probability, and $U_3$ has a uniform distribution spanning from $0$ to a constant value $N$. Consider the instance $v=(1, 0, 0)$, with $V_1$ denoted as the sensitive feature. The counterfactuals for $v$ with respect to $\cM_{A}$ and $\cM_{A}$ are $(0,0,0)$ and $(0,0,N)$, respectively.
\end{example}
Both SCMs share identical causal graphs, observational distributions, and intervention distributions for all possible interventions. Consequently, there exist no randomized trials or observational data capable of discerning between $\cM_A$ and $\cM_B$. Hence, when our interest lies in counterfactual statements, additional assumptions become imperative.
Example~\ref{ex:lack} establishes the following proposition.

\begin{proposition}[\textbf{Metric Estimation Not Guaranteed}]
 \label{pr:imposible}
If the set of descendants of intervened variables is non-empty, estimating a causal fair metric, from observational data or with a causal graph, necessitates knowledge of the true structural equations, irrespective of data quantity or type.
\end{proposition}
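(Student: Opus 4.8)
The plan is to formalize the obstruction witnessed by Example~\ref{ex:lack} into a non-identifiability argument. First I would recall the structure of a causal fair metric from Proposition~\ref{pr:metric}: it has the form $d(v,v') = d_{\cX}(\varphi_{\cX}(v),\varphi_{\cX}(v'))$, so it is built from the embedding $\varphi$ into the semi-latent space, which in turn depends on the structural equations $f_i$ for the non-sensitive coordinates (the coordinates $q_i = u_i = v_i - f_i(v_{\pa(i)})$ for $i \notin \cI$). The key point I want to isolate is that $\varphi$, and hence $d$, is sensitive to the choice of structural equations among descendants of the intervened (sensitive) variables, whereas the observational and all interventional distributions are not. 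So the argument is really: (1) exhibit two SCMs $\cM_A, \cM_B$ agreeing on the observational distribution and on every interventional distribution but differing in a counterfactual quantity that any causal fair metric must respect; (2) conclude that no estimator taking only such data (plus possibly the causal graph) as input can output the correct metric for both.

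The key steps, in order: (i) State precisely what information the hypothetical estimator is allowed: the observational distribution $\bP_{\V}$, optionally the causal DAG $\cG$, and the outcomes of arbitrary hard/soft interventions, but \emph{not} the functions $\bF$. (ii) Invoke Example~\ref{ex:lack}: $\cM_A$ and $\cM_B$ share the same DAG, the same $\bP_{\V}$, and the same interventional distributions $\bP^{do(\cdot)}_{\V}$ for all interventions — this is the assertion made right after the example, which I may assume. (iii) Observe that for the instance $v=(1,0,0)$ with $V_1$ sensitive, the counterfactual twins differ: under $\cM_A$ the twin with $V_1$ flipped is $(0,0,0)$, while under $\cM_B$ it is $(0,0,N)$. (iv) Use property (i) of Definition~\ref{def:cfm}: a causal fair metric must assign distance $0$ between $v$ and its twin. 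Hence the correct metric $d_A$ for $\cM_A$ satisfies $d_A(v,(0,0,0))=0$, and for generic choices of the component metrics $d_A(v,(0,0,N)) = d_A\big(\varphi_{\cX}(v),\varphi_{\cX}((0,0,N))\big) > 0$ provided $N$ is chosen large enough that the non-sensitive exogenous coordinate $u_3$ differs by $N$ and the metric $d_3$ is not the zero function (here I lean on the standing assumption that the per-component dissimilarities are fixed, genuine metrics supplied by domain experts). Symmetrically $d_B(v,(0,0,N))=0$ while $d_B(v,(0,0,0))>0$. (v) Conclude: any estimator sees identical inputs in the two worlds, so it produces the same output metric $\hat d$; but $\hat d$ cannot equal both $d_A$ and $d_B$ since they disagree at the pair $(v,(0,0,N))$ (one is zero, the other positive). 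Therefore correct estimation fails in at least one world, regardless of the amount or type of non-counterfactual data. (vi) Finally, note the hypothesis ``descendants of intervened variables non-empty'': if the sensitive/intervened set had no descendants, the semi-latent coordinates of all other variables would be counterfactually unaffected by changing the sensitive value and the obstruction would not arise — so this hypothesis is exactly what makes $V_3$ (a descendant of $V_1$) available as the witnessing coordinate.

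I expect the main obstacle to be step (iv): turning the \emph{qualitative} statement ``the counterfactuals differ'' into the \emph{quantitative} statement ``any causal fair metric must separate these two points,'' since Definition~\ref{def:cfm} only pins down that distance is zero \emph{on} twins, not that it is bounded below off twins — indeed the footnote stresses the metric need not be positive-definite. To handle this cleanly I would make explicit the mild nondegeneracy already assumed in $\S$\ref{sec:metric_learning} (the domain-expert component metrics $d_i$ are fixed in advance and are not identically zero on the non-sensitive coordinates), and choose the free constant $N$ in Example~\ref{ex:lack} large enough that $d_3(u_3, u_3')$ with $|u_3-u_3'| = N$ exceeds any fixed threshold; then $(0,0,0)$ and $(0,0,N)$ are genuinely separated by every admissible causal fair metric built over those components. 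A secondary subtlety is making the quantifier order airtight: the claim is ``for every estimator there exist two SCMs on which it fails,'' which Example~\ref{ex:lack} supplies uniformly, so the same pair works against all estimators simultaneously — worth stating explicitly so the proof does not appear to let the adversary depend on the estimator in a problematic way.
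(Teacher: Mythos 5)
Your proposal is correct and takes essentially the same route as the paper, which justifies the proposition solely by pointing to Example~\ref{ex:lack}: two SCMs sharing the same causal graph, observational distribution, and all interventional distributions, yet producing different counterfactual twins for $v=(1,0,0)$, so no estimator fed only such data can recover the twin-dependent metric in both worlds. Your step (iv) --- explicitly arguing that the two candidate metrics genuinely disagree at the pair $(v,(0,0,N))$ via a nondegeneracy condition on the component metrics, rather than relying only on the qualitative fact that the counterfactuals differ --- is more careful than the paper, which leaves this point implicit and offers no further proof beyond the example.
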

Prop.~\ref{pr:imposible} asserts that without prior SCM knowledge, data-driven metric learning is unfeasible. As SCM knowledge is often elusive in practice, an alternative is estimating the causal-fair metric directly from data labeled with distances. Metric learning methods vary, including spectral, probabilistic, and deep learning. Spectral techniques use eigenvalue decomposition to represent data in a lower-dimensional space, while probabilistic methods infer a low-dimensional latent variable underlying the high-dimensional data.
Both spectral and probabilistic metric learning techniques employ the generalized Mahalanobis distance, denoted as Eq.~\ref{eq:mahalanobis}, with a predetermined kernel such as the Gaussian kernel or a kernel that is learned, aiming to optimize the dissimilarity matrix~\cite{ghojogh2022spectral}.

Conversely, deep metric learning utilizes neural networks to determine the embedding function. The network aims to reduce distances between similar points while increasing distances between dissimilar ones. This approach aligns with Proposition \ref{pr:metric}, which asserts the existence of an embedding $\varphi_{\cX}: \mathbb{R}^n \rightarrow \mathbb{R}^k$. The causal fair metric is defined as $d_{\varphi}(v,w) =  d_{\cX}(\varphi_{\cX}(v), \varphi_{\cX}(w))$, with $k$ indicating the dimension of the non-sensitive exogenous space. This leads to three main insights: the dimensionality of the embedding space, the guarantee of independence from coordinates in this space, and understandings about $d_{\cX}$. These insights are crucial for creating specific deep-learning techniques for causal fair metric learning.

In designing a neural network, we focus on feed-forward neural networks with a depth of $L \geq 1$. These networks are characterized by their layer widths, denoted as $d_1,\ldots,d_L$, where $d_0 = n$ represents the input size, and $d_L = k$ represents the output size. Each layer has an element-wise activation function $\sigma_i$, which operates on $\mathbb{R}^{d_{i-1}}$ and maps to $\mathbb{R}^{d_i}$. The transformation process of the network is expressed as follows:
\begin{equation}
\label{eq:network}
\varphi_{_\W}(v) = \sigma_L(\sW_L \times \sigma_{L-1}(\sW_{L-1} \times \cdots \sigma_1(\sW_1 \times v)\cdots))
\end{equation}
Consequently, the causal fair metric can be expressed as $d(v,w) =  d_{\cX}(\varphi_{_\W}(v), \varphi_{_\W}(w))$, where $\mathbf{W} = (\mathbf{W}_1,\ldots,\mathbf{W}_L)$ denotes a tuple of matrices. Each matrix $\mathbf{W}_i \in \mathbb{R}^{d_i \times d_{i-1}}$, and $d_{\cX}$ is a known metric. This matrix tuple family is symbolized as $\mathcal{W}$, thus allowing the representation of the family of non-linear functions as $\Phi = \{\varphi_{_\mathbf{W}}: \mathbf{W} \in \mathcal{W}\}$.

To assess how the causal fair metric affects deep learning, we need specific measures to track progress in various scenarios. 
\citet{kozdoba2021two}~ adopted the approach from \citet{bartlett2017spectrally} in accordance with metric learning principles. Their results are based on the following norm definitions: The spectral norm of a matrix $W \in \mathbb{R}^{s \times t}$ is denoted as $\| W\|$. Additionally, $\|W\|_{2,1}$ is introduced as the sum of the $\ell_2$ norms of each column in matrix $W$, where $W_{.,i}$ represents the $i$-th column of the matrix.
\begin{proposition}[\citealp{kozdoba2021two}]
\label{pr:deep}
Consider a feed-forward network with $L$ layers described in Eq.~\ref{eq:network}. Assuming that activation functions $\rho_i$ are $\lambda_i$-Lipschitz, and the feature space $\cV$ is bounded with $\|v\|_2\leq B$ for all $v \in \cV$, the Rademacher complexity of $\Phi$ for a family of matrix tuples $\mathcal{W}$ is bounded as follows:
\begin{equation*}
\mathcal{R}(\Phi) \leq \Bar{O}\left(\dfrac{1}{\sqrt{n}} B^2 \left( \prod_{i=1}^{n}\ \lambda_i \|\mathcal{W}_i\| \right)^2  \left( \sum_{i = 1}^{L} \dfrac{\|\mathcal{W}_i\|_{2,1}^{\frac{2}{3}}}{\|\mathcal{W}_i\|^{\frac{2}{3}}}\right)^{\frac{3}{2}}   \right)
\end{equation*}
Here, $\|\mathcal{W}_i\|$ represents the supremum norm over $W$ in $\mathcal{W}$ for $W_i$, and $\|\mathcal{W}_i\|_{2,1}$ is the supremum over $W$ in $\mathcal{W}$ for $W_i$ with respect to the $\ell_{2,1}$ norm.
\end{proposition}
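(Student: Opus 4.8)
The statement is Proposition~\ref{pr:deep}, attributed to \citet{kozdoba2021two}, bounding the Rademacher complexity of the class $\Phi$ of feed-forward networks. Since this is an external result being imported, the natural proof strategy is to reduce it to the spectral-norm complexity bound of \citet{bartlett2017spectrally} and then account for the metric-learning setting as in \citet{kozdoba2021two}. Below is my proposal.

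\textbf{Plan.} The plan is to invoke the covering-number / Rademacher bound of \citet{bartlett2017spectrally} for the class of real-valued functions computed by $L$-layer networks with prescribed spectral and $(2,1)$-norm budgets, and then lift it to the metric-learning class $\Phi$ whose ``labels'' are built from pairs $(v,w)$ and the known outer metric $d_{\cX}$. First I would fix the function class: each $\varphi_{_\W} \in \Phi$ is a composition of linear maps $\W_i$ (with $\|\W_i\|$ and $\|\W_i\|_{2,1}$ controlled by the suprema over $\mathcal W$) interleaved with the $\lambda_i$-Lipschitz activations $\rho_i$. Composing Lipschitz constants, the map $v \mapsto \varphi_{_\W}(v)$ is $\bigl(\prod_{i}\lambda_i\|\W_i\|\bigr)$-Lipschitz on the ball $\{\|v\|_2 \le B\}$, and its outputs lie in a ball of radius $O\bigl(B\prod_i \lambda_i\|\W_i\|\bigr)$. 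Then I would write the metric-learning loss as a function of the \emph{concatenated} input $(v,w)$ applied through $\varphi_{_\W}$ on each coordinate block: $(v,w)\mapsto d_{\cX}(\varphi_{_\W}(v),\varphi_{_\W}(w))$. Because $d_{\cX}$ is a fixed known metric — and on $\mathbb R^k$ a metric of the assumed (inner-product / Mahalanobis) form is itself Lipschitz on bounded sets — the composed map is Lipschitz in $\varphi_{_\W}(v)$ and $\varphi_{_\W}(w)$, so by the Ledoux--Talagrand contraction inequality the Rademacher complexity of the loss class is controlled, up to constants, by that of $\Phi$ applied to the (bounded, doubled) sample.

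\textbf{Key steps, in order.} (1) State the \citet{bartlett2017spectrally} bound: for the scalar-output network class with spectral-norm bound $s_i \ge \|W_i\|$ and reference-matrix bound on the $(2,1)$ side, the empirical Rademacher complexity over a sample of size $n$ with $\|v\|_2 \le B$ is $\widetilde O\!\left(\tfrac{B}{\sqrt n}\bigl(\prod_i s_i\lambda_i\bigr)\bigl(\sum_i (\|W_i\|_{2,1}/s_i)^{2/3}\bigr)^{3/2}\right)$. (2) Extend from scalar to $k$-dimensional output: treat each output coordinate as a scalar network sharing all but the last row-block, and use the vector-contraction inequality (Maurer) or a coordinatewise union, absorbing the $\sqrt k$ (or dimension) factor into the $\widetilde O$. (3) Apply contraction through $d_{\cX}$: since $d_{\cX}$ restricted to the range ball is $C$-Lipschitz with $C = O(1)$ for the normalized Mahalanobis form (or $C = O(\text{range})$ in general, which contributes one more factor of $B\prod_i\lambda_i\|\W_i\|$, explaining the \emph{square} $B^2$ and the \emph{square} on $\prod_i\lambda_i\|\W_i\|$ in the stated bound), the metric-loss class has Rademacher complexity at most $C$ times that of $\Phi$. (4) Replace the $s_i$ and reference $(2,1)$-norms by the suprema $\|\mathcal W_i\|$ and $\|\mathcal W_i\|_{2,1}$ over the allowed family $\mathcal W$, which only enlarges the class and hence is an upper bound; collect constants and log factors into $\bar O$. (5) Read off the final expression, matching the powers of $B$, of $\prod_i\lambda_i\|\mathcal W_i\|$, and of the $\sum_i (\|\mathcal W_i\|_{2,1}/\|\mathcal W_i\|)^{2/3}$ term.

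\textbf{Main obstacle.} The delicate point is step (3) together with bookkeeping the exponents: the statement has $B^2$ and $\bigl(\prod_i\lambda_i\|\mathcal W_i\|\bigr)^2$, i.e.\ squares relative to the ``one-sided'' Bartlett bound, and this squaring must be justified as coming from the metric taking \emph{two} network outputs as arguments (one Lipschitz factor per argument) and/or from the diameter of the output ball entering the Lipschitz constant of $d_{\cX}$ when $d_{\cX}$ is an unnormalized quadratic form; getting this factor-tracking exactly right — and confirming it is consistent with the $\tfrac1{\sqrt n}$ rate rather than, say, $\tfrac1{\sqrt n}$ being replaced by something worse under the doubled sample — is where the argument needs the most care. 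The secondary subtlety is that $d_{\cX}$ is assumed only to be ``continuous along its diagonal in each component'' in Proposition~\ref{pr:metric}; for the complexity bound to hold one really needs the quantitative Lipschitz-on-bounded-sets property, so I would make that regularity explicit (it holds automatically for the inner-product / kernelized-Mahalanobis instantiation in Eq.~\ref{eq:mahalanobis}) and note that everything else is a direct transcription of the generalization bound of \citet{kozdoba2021two}.
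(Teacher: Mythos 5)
The paper offers no proof of Proposition~\ref{pr:deep}: it is imported verbatim from \citet{kozdoba2021two}, and the appendix proves only Propositions~\ref{pr:metric}--\ref{pr:twinDef}. Your reconstruction --- the \citet{bartlett2017spectrally} covering-number bound, lifted to vector-valued outputs and pushed through the pairwise metric loss by contraction --- is exactly the route the cited source takes, and your accounting for the squared factors $B^2$ and $\bigl(\prod_i \lambda_i\|\mathcal{W}_i\|\bigr)^2$ as arising from the two network arguments of $d_{\cX}$ is the right explanation of where the exponents come from. Your caveat that $d_{\cX}$ must be quantitatively Lipschitz on the output ball (not merely continuous along the diagonal, which is all Proposition~\ref{pr:metric} guarantees) identifies a hypothesis the paper leaves implicit; note also that the index in $\prod_{i=1}^{n}$ in the stated bound should read $L$, and the $n$ in $1/\sqrt{n}$ is the sample size rather than the input dimension as elsewhere in the paper.
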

The aforementioned proposition asserts that deep metric learning can discern embeddings irrespective of dimension or metric (in the above formula, dimensions are not included). However, the numerical analysis demonstrates how causal fair metric assumptions enhance estimations compared to general metric learning methods.
\begin{table*}[h]
\centering
\fontsize{8}{8}\selectfont
\begin{tabular}[t]{l@{\hspace{4pt}} l@{\hspace{5pt}} c@{\hspace{4pt}} c@{\hspace{4pt}} c@{\hspace{4pt}} c@{\hspace{4pt}} | c@{\hspace{4pt}} c@{\hspace{4pt}} c@{\hspace{4pt}} c@{\hspace{4pt}} | c@{\hspace{4pt}} c@{\hspace{4pt}} c@{\hspace{4pt}} c@{\hspace{4pt}} | c@{\hspace{4pt}} c@{\hspace{4pt}} c@{\hspace{4pt}} c@{\hspace{4pt}}}
\toprule
& & \multicolumn{8}{c}{\normalsize \textbf{Real-World Data}} & \multicolumn{8}{c}{\normalsize\textbf{Synthetic Data}}\\
\cmidrule(lr){3-10} \cmidrule(lr){11-18} \\[-2ex]
& & \multicolumn{4}{c}{\textbf{\small Adult}} & \multicolumn{4}{c|}{\textbf{\small COMPAS}} & \multicolumn{4}{c}{\textbf{\small Lin}} & \multicolumn{4}{c}{\textbf{\small NLM}}\\
\cmidrule(lr){3-6} \cmidrule(lr){7-10} \cmidrule(lr){11-14} \cmidrule(lr){15-18} \\[-2ex]
  $\Delta$ & Loss Function & Acc$ \uparrow$ & FN$ \downarrow$ & MAE $\downarrow$ & RMSE $\downarrow$ & Acc & FN & MAE & RMSE & Acc & FN & MAE & RMSE & Acc & FN & MAE & RMSE \\
\midrule
\multirow{3}{*}{\rotatebox{0}{0.10}}
& Distance-based & \textbf{0.948} & 0.036 & \textbf{0.029} & \textbf{0.044} & \textbf{0.989} & 0.008 & \textbf{0.006} & \textbf{0.009} & \textbf{0.992} & 0.007 & \textbf{0.004} & \textbf{0.005} & \textbf{0.976} & 0.018 & \textbf{0.007} & \textbf{0.013}\\
& Label-based & 0.822 & \textbf{0.003} & 0.098 & 0.134 & 0.842 & \textbf{0.000} & 0.098 & 0.130 & 0.829 & \textbf{0.000} & 0.096 & 0.129 & 0.843 & \textbf{0.000} & 0.095 & 0.126\\
& Triplet-based & 0.619 & 0.202 & 0.173 & 0.228 & 0.614 & 0.206 & 0.177 & 0.233 & 0.637 & 0.219 & 0.177 & 0.233 & 0.738 & 0.200 & 0.174 & 0.230\\
\midrule
\multirow{3}{*}{\rotatebox{0}{0.20}}
& Distance-based & \textbf{0.955} & 0.033 & \textbf{0.050} & \textbf{0.078} & \textbf{0.988} & 0.010 & \textbf{0.010} & \textbf{0.017} & \textbf{0.990} & 0.008 & \textbf{0.006} & \textbf{0.008} & \textbf{0.991} & 0.007 & \textbf{0.007} & \textbf{0.013}\\
& Label-based & 0.825 & \textbf{0.002} & 0.192 & 0.266 & 0.850 & \textbf{0.000} & 0.193 & 0.257 & 0.838 & \textbf{0.000} & 0.195 & 0.259 & 0.852 & \textbf{0.000} & 0.189 & 0.252\\
& Triplet-based & 0.805 & 0.187 & 0.346 & 0.457 & 0.841 & 0.073 & 0.353 & 0.466 & 0.663 & 0.210 & 0.354 & 0.467 & 0.743 & 0.150 & 0.348 & 0.459\\
\bottomrule
\end{tabular}
\caption{\footnotesize{The table shows results of a numerical experiment comparing different learning scenarios, evaluated by accuracy (\textbf{Acc} - higher is better), false negative error (\textbf{FN} - lower is better), root mean square error (\textbf{RMSE} - lower is better), and mean average error (\textbf{MAE} - lower is better). The best scenario for each dataset and perturbation radius is in bold. XIcor correlation loss function and a 5-layer embedding network are used.}}
\label{tab:metric_5}
\end{table*}
\begin{figure*}[t]
\centering
\includegraphics[width=\textwidth]{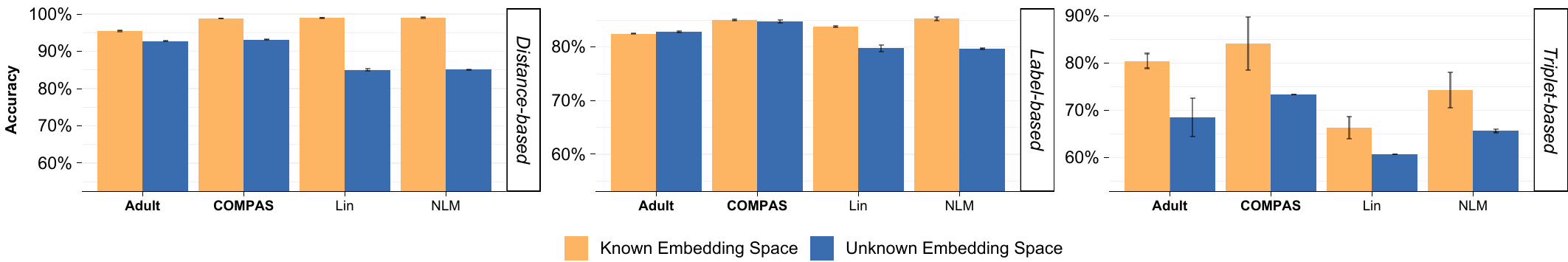}
\includegraphics[width=\textwidth]{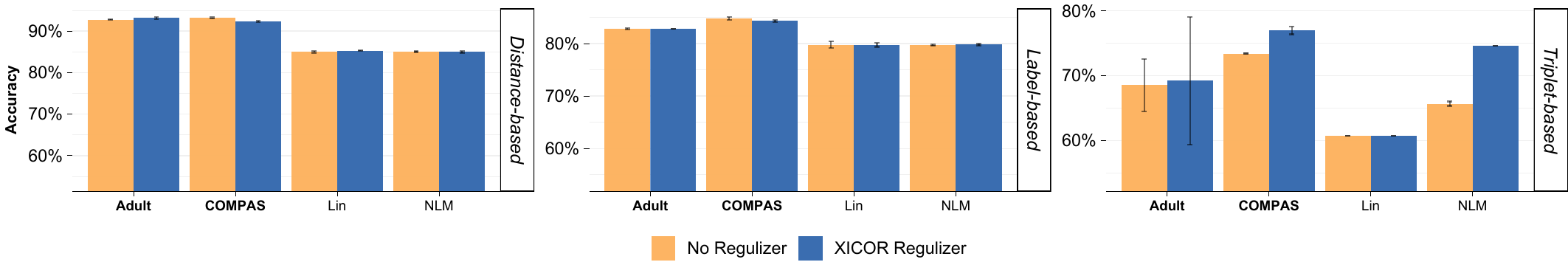}
\caption{ \footnotesize{This figure demonstrates the effect of causal metric assumptions on the accuracy of deep metric models:
(up) Accuracy performance comparison based on embedding layer sizes and embedding space metric knowledge shows improved prediction accuracy.
(down) In simpler models, the network efficiently learns embedding space properties. However, with less precise metric data, as in Triplet-based scenarios, adding decorrelation methods boosts accuracy.}}
\label{fig:nets_acc}
\label{fig:radius_acc}
\label{fig:regularizer_acc}
\end{figure*}
\section{Causality-aware Fair Adversarial Learning}
\label{sec:ecapify}
Fair adversarial learning seeks to predict the target variable accurately while maintaining fairness concerning sensitive attributes. Using the set of observations $\mathcal{D} = \{(v_i, y_i)\}_{i = 1}^{n}$, it entails a min-max optimization problem where the model minimizes classification error and maximizes adversarial loss around each instance $v_i$.

A key insight from Prop.~\ref{pr:imposible} is that fair adversarial learning using just observational data $\mathcal{D}$ is unattainable in causal structures, as inferring a suitable metric for assessing counterfactuals is not possible. Practically, $\Delta$ is set by varying values within a perturbation ball to include samples deemed similar. Essentially, people are learning the metric based on their experience. At best, this method estimates the upper bound of the appropriate $\Delta$, highlighting the significance of metric learning. 

To initiate fair adversarial learning, the first step is to use metric estimation for constructing $B^{\PCP}_{\Delta}$, as demonstrated in the subsequent min-max adversarial learning framework:
\begin{equation*}
\min_\psi {\bE}_{(v,y) \sim \cP_{\mathcal{D}}}[ \max_{w \in B^{\PCP}_{\Delta}(v)} \ell(h_\psi(w),y)]
\end{equation*}
In \citet{ehyaei2023causal}, the limitations of gradient descent in adversarial learning are discussed, and a superior, locally linear method named \textit{CAPIFY} is proposed for contexts with causal structures and sensitive attributes. This approach includes integrating regularizers into the loss function, as detailed below:
\begin{equation*}
\begin{aligned}
\mathcal{R}_\Delta(v, y) = \mu_1 * \max_{s \in \mathcal{S}} \ell(h(\ddot{v}_s),y) + 
\mu_2 *\| \nabla_{v} \ell(\CF(v,\delta),y)_{|_{\delta = 0}}\|_* +  
\mu_3 *\gamma_\Delta(v, y)
\end{aligned}
\end{equation*}
where the term $\gamma_\Delta(v, y)$ is obtained by:
\begin{equation*}
\underset{\delta \in B^{\PCP}(\Delta)}{\max} |\ell(\CF(v,\delta),y)  - \ell(v, y) -\delta^T \nabla_v \ell(\CF(v,\delta),y)_{|_{\delta = 0}}|.
\end{equation*}
The first part of the regularizer ensures counterfactual fairness by measuring the maximum loss of the instance label and its twins. The second and third parts assess the adversarial robustness of classifier \(h\) concerning continuous features around each twin w.r.t. causal structure. Calculating these two terms requires knowledge of the causal functional structure. 
Without knowing the SCM, twins of an instance can be estimated using the causal fair metric introduced in Section~\ref{sec:metric_learning}. As the causal structure is unknown, by using twins, the second and third terms are estimated by
$\max_{s \in \mathcal{S}}  \{|\Delta^T .\nabla_{v} \ell(v,y)_{|_{\ddot{v}_s}}| + \hat{\gamma}_\Delta(\ddot{v}_s,y)\}$, where $\hat{\gamma}_\Delta(v,y)=|\ell(v + \Delta, y)  - \ell(v, y) -\Delta^T .\nabla_v \ell(v,y)|$.

By using the last equations, we introduce the \textbf{ECAPIFY} method, which operates without SCM knowledge and relies solely on metric learning. To train a classifier with \textbf{ECAPIFY}, the following regularizer is added to the learning loss function.
\begin{equation*}
\begin{aligned}
\hat{\mathcal{R}}_\Delta(v, y) = \max_{s \in \mathcal{S}} \{  \mu_1 * \ell(h(\ddot{v}_s),y) + 
 \mu_2 *|\Delta^T .\nabla_{v} \ell(\ddot{v}_s,y)| + \mu_3* \hat{\gamma}_\Delta(\ddot{v}_s,y)\}
\end{aligned}
\end{equation*}
\section{Numerical Experiments}
\label{sec:CE}
In this section, we empirically validate the metric learning method presented in Section~\ref{sec:metric_learning}. We compare our method, which incorporates causal structure and sensitive information, to standard deep metric learning. We use deep learning to estimate the embedding function, and Siamese metric learning as the baseline. Simulations are divided into three scenarios:

\noindent\textbf{Distance-based}: Utilizing distance-tagged triplets $(v_i, v'_i, d_i)$, where $d_i$ indicates the non-negative real number $d(v_i, v'_i)$ as a distance. We also apply the \hyperref[def:huber]{Huber function} $\ell(v_i, v_i',d_i) = L_{\delta}(d_i, d(\varphi(v_i),\varphi(v_i'))) $ for learning loss.

\noindent\textbf{Label-based}: Utilizing a Siamese network~\cite{chicco2021siamese} with a contrastive loss for triplets $(v_i, v'_i, y_i)$, where $y_i \in \{0,1\}$ indicates proximity between points, and the loss function $\ell(v_i, v'_i, y_i)$ equals to
    $(1-y_i)d(\varphi(v_i),\varphi(v_i ')) + y_i[-d(\varphi(v_i),\varphi(v_i'))+m]_{_+}$,
    here $m>0$ is the marginal and $[.]_{_+}:= \mathrm{max}(.,0)$ is standard Hinge loss. 
    
\noindent\textbf{Triplet-based}: In this approach, tuples $(v^1_i, v^2_i, v^3_i, y_i)$ are considered, where $y_i$ denotes the closeness of $v^1_i$ to $v^2_i$ compared to $v^1_i$ and $v^3_i$. Embedding is trained using a Siamese network with the triplet loss function 
    $\ell(v^1_i, v^2_i, v^3_i, y_i) =  [d(\varphi(v^1_i),\varphi(v^2_i)) - d(\varphi(v^1_i),\varphi(v^3_i))+m]_{_+}$.

For designing the embedding network, we use a feed-forward network with 100-node layers and \textit{PReLU} activation. We consider two embedding layer dimensions: a known dimension and half the input size for an unknown network. We test the network's depth with either 5 or 14 hidden layers and evaluate the impact of a known metric in the exogenous space by comparing scenarios with both known and unknown metrics.
We investigate the impact of assuming coordinate independence by including a decorrelation loss function~\cite{patil2022decorrelation}, which uses the Frobenius norm of the difference between the identity matrix and XIcor~\cite{chatterjee2021new}, a non-parametric correlation measure, on training performance.

In our numerical experiments, a major challenge is finding well-known datasets in causal inference and metric learning. To address this, for real-world datasets like Adult~\cite{kohavi1996uci} and COMPAS~\cite{washington2018argue}, we first establish a causal structure as in~\citet{nabi2018fair}. We also use synthetic datasets for Linear (LIN) and Non-linear (NLM) SCMs. For each SCM, we create three data scenarios using its structure. We employ the PCP ball with radii \(\Delta = 0.1\), and \(0.2\) for contrastive label creation, generating 10,000 samples. We then assess deep metric learning across 100 iterations with varying random seeds.

To assess learning performance, we employ classifier metrics such as accuracy (\textit{Acc}), Matthews correlation coefficient (\textit{MCC}), false-negative (\textit{FN}), and false-positive (\textit{FP}) rates for label outputs. For embedding kernel learning, we use root mean square error (\textit{RMSE}) and mean absolute error (\textit{MAE}). Continuous metrics are used in both label and triplet-based kernel learning. In distance-based scenarios, label predictions are made by generating labels within the $B^{\PCP}_{\Delta}$ for uniform performance evaluation across different settings.

To evaluate the \textbf{ECAPIFY} approach, we compare it with traditional empirical risk minimization (\textit{ERM}), Adversarial Learning (\textit{AL}) as delineated by~\citet{madry2017towards}, and \textit{CAPIFY}, which is recognized for its superior effectiveness in mitigating unfairness, as detailed in~\citet{ehyaei2023causal}.
Our simulation settings and performance metrics mirror those in ~\citet{ehyaei2023causal}. We set the perturbation radius at \(\Delta = 0.01\) and report the percentages of non-robust, non-counterfactual instances, and their combination, along with accuracy.
Additional simulation details are in the appendix, and our numerical analysis codes are available on GitHub.
\begin{figure*}[t]
    \centering
    \includegraphics[width=0.49\textwidth]{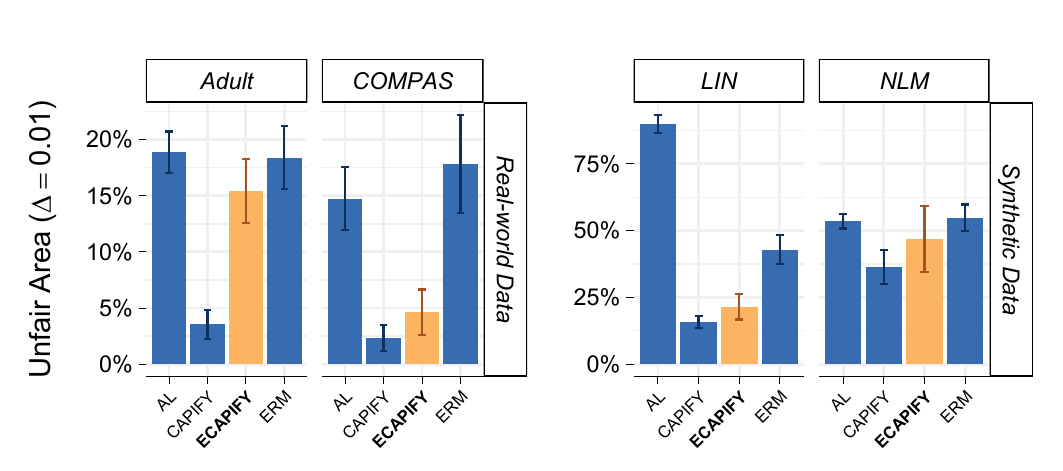}
    \includegraphics[width=0.49\textwidth]{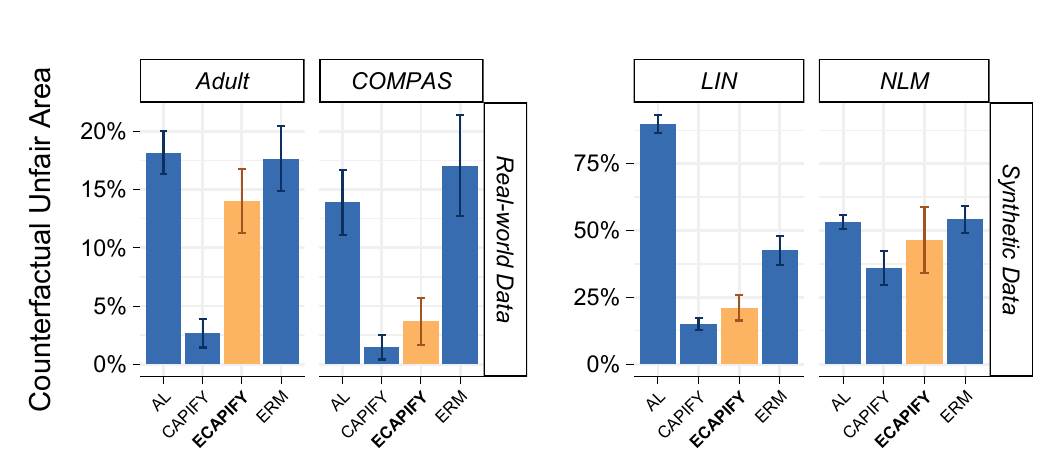}
    \includegraphics[width=0.49\textwidth]{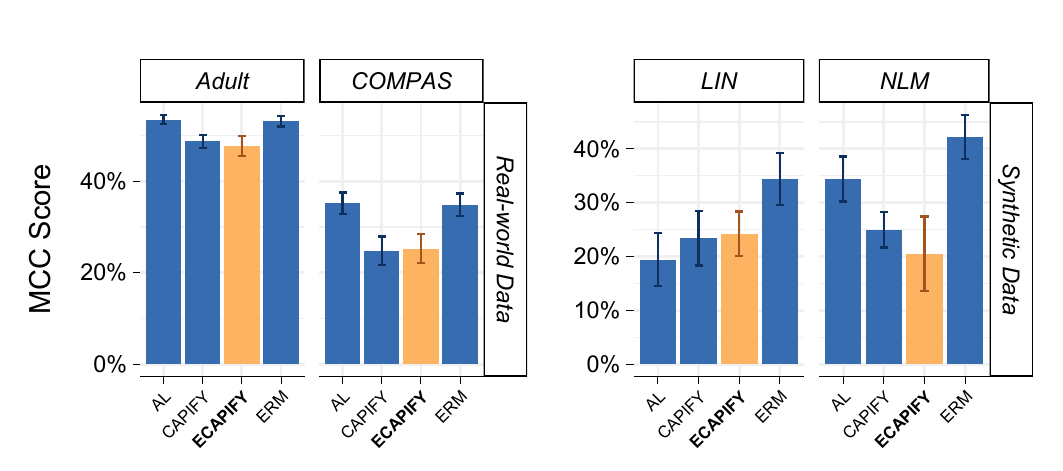}
    \includegraphics[width=0.49\textwidth]{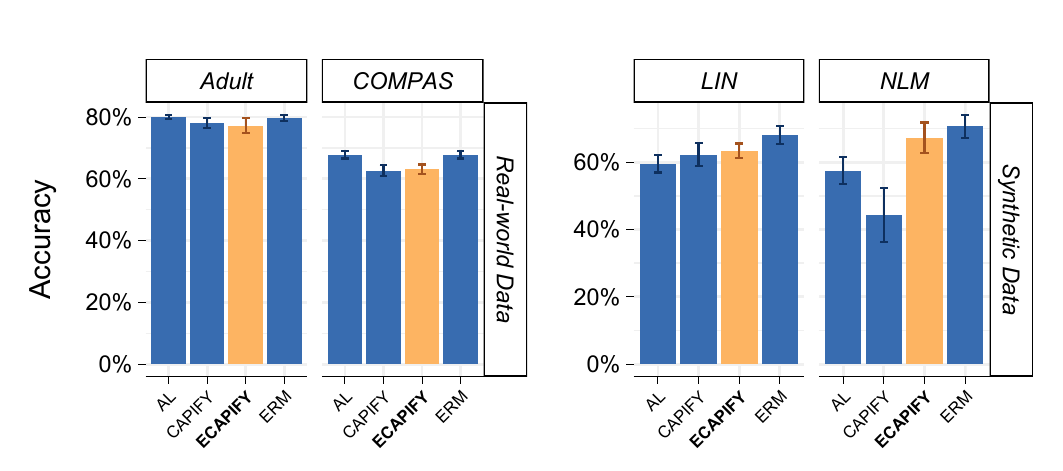}
    \caption{\footnotesize{Presents the results of our numerical experiment, evaluating ECAPIFY's performance across various models and datasets. (Top left) Bar plot comparing models using unfair area percentage (lower is better) at $\Delta = .01$. (Top right) Counterfactual unfair area percentage (lower is better). (Bottom left) Matthews correlation coefficient illustrating classifier performance (higher is better). (Bottom right) Bar plot contrasting methods by prediction performance (higher is better).}}
\label{fig:emperical}
\end{figure*}

\paragraph{Results of Metric Learning} As demonstrated in Fig.~\ref{fig:nets_acc} and in details in Tab.~\ref{tab:metric_5}, our simulation confirms that knowing the metric and dimensions of the embedding space improves accuracy in metric learning. Although Prop.~\ref{pr:imposible} asserts that deep learning can converge with various layer sizes without embedding space knowledge, additional information significantly enhances results, particularly in triplet-based scenarios.
Fig.~\ref{fig:regularizer_acc} shows that embedding learning works well in distance-based and label-based scenarios, where adding the decorrelation loss function does not make a big difference. But in the triplet scenario, where only metric relations are known, this loss function improves results.

To find the optimal configurations for embedding network layers, we ran experiments with various depths of networks. We find that five layers network is ideal for label-based and distance-based scenarios, whereas triplet-based scenarios perform better with deeper network structures. Further results are available in the Appendix (see Table~\ref{tab:layer_size}).

To summarize simulation results, analysis of various learning methodologies shows that distance-based metric learning is most effective when precise distance-based data is available. However, in practical situations, this ideal may not be achievable. In such cases, the label-based method becomes a viable alternative for metric approximation. This method's accuracy improves when embedding space dimensions and metric information is combined, as Table~\ref{tab:metric_5} supports. The label-based approach also has a lower false negative rate compared to other methods, making it effective in approximating the true metric. This is particularly useful in scenarios requiring fair metrics, like robust learning, as it helps maintain robustness criteria and builds a stronger model. When label data is unavailable, the triplet method, enhanced with a decorrelation loss function and deeper networks, is useful for deducing the embedding function.
\paragraph{Results of ECAPIFY.} 
In Fig.~\ref{fig:emperical}, our simulation results, utilizing both real-world and synthetic datasets, show that ECAPIFY, which is equipped with metric learning and not necessitating knowledge of SCM, yields results akin to CAPIFY (regarded as an oracle), achieving similar effectiveness in diminishing unfair areas with $\Delta = 0.01$. Additionally, there is a notable performance improvement compared to both ERM and AL, which concentrate on adversarial training, while still preserving high prediction accuracy.

A critical aspect of ECAPIFY is that, although Prop.~\ref{pr:imposible} highlights the impracticality of developing adversarial training using only observational data from the population, ECAPIFY provides a feasible approach for adversarial learning utilizing empirical data. In simpler terms, when we lack knowledge of the SCM, the estimation of a causal fair metric becomes necessary in adversarial learning.

\section{Discussion and Future Work}
In this study, we introduce a method to create and learn a fair metric that integrates causal structures and protects sensitive attributes. We outline the strengths and weaknesses of our approach compared to other deep metric learning methods. A key challenge is the lack of real datasets for metric learning with causal structures. Recognizing the need for fair metrics in equitable systems, we hope for the compilation of such data in future research.

In our study, we did not study the specifics of the embedding network architecture. Instead, we employed a straightforward feed-forward network, which is better suited for our tabular data. This choice aligns with Prop.~\ref{pr:imposible}, assisting in discerning the impacts of various assumptions. For specific datasets, an exhaustive architecture search can be tailored to the context and application.

Our empirical results indicate our ability to simultaneously train an accurate classifier considering individual fairness and adversarial perturbation. Notably, our proposed model does not rely on an SCM for causality; instead, we employ metric learning to develop a causally-aware fair, and robust classifier.

One critique of our methodology is its similarity to numerous metric learning approaches that lack theoretical guarantees regarding estimator performance and potentially grapple with the issue of local minima. In forthcoming work, by imposing constraints on the structure of the causal fair metric, we aim to introduce metric learning methodologies underpinned by explicit convergence theorems.

Finally, the notion of protected causal perturbation that we proposed in this paper is applicable in analyzing fairness and robustness in other fields assuming data emerges from a causal framework, including algorithmic recourse, causal bandits, causal reinforcement learning, and other causal ML models. Exploring these domains remains an open avenue for the future.

\bibliography{references}

\clearpage
\tablefirsthead{\toprule Symbol&\multicolumn{1}{c}{Notion} \\ \midrule}
\tablehead{%
\toprule
Symbol&\multicolumn{1}{c}{Notion}\\ \midrule}
\begin{supertabular}{p{3cm} p{5cm}}
    SCM  & Structural causal model\\
    $d(v_i, v_j)$ & Fair distance metric between $v_i$ and $v_j$ \\
    $\mathcal{M}$ & Structural causal model\\
    $\mathbf{V}$ & Feature or endogenous space includes $n$ random variables $\{\mathbf{V}_i\}^{n}_{i =1}$\\
    $\mathbf{U}$ & noise or exogenous space includes $n$ random variables $\{\mathbf{U}_i\}^{n}_{i =1}$\\
    $\mathcal{G}$ & Causal graph\\
    $\bF$  & Set of structural equations $f_i$\\
    $\bP_{\mathbf{U}}$ & Exogenous probability distribution\\
    $g: \cU \rightarrow \cV$ & reduced-form mapping from $\cU$ to $\cV$\\
    $\bP_{\V}$ & Feature probability distribution\\
    ANM & Additive noise model\\
    $\mathcal{M}^{do(\mathbf{V}_{\mathcal{I}} := \theta)})$ & Hard intervention respect to $\mathcal{I}$ subset of feature \\
    $\mathcal{M}^{do(\mathbf{V}_{\mathcal{I}} \si \delta)}$ & Additive intervention\\
    $\CF(v, \theta)$ & counterfactual instance respect to hard intervention\\
    $g^{\theta}$ & Altered reduced-form mapping w.r.t. $\mathcal{M}^{do(\mathbf{V}_{\cI} := \theta)}$\\
    $\mathbf{S}$ & Sensitive attribute\\
    $\mathcal{S}$ & The level sets of sensitive attribute\\
    $\ddot{v}_s = \text{CF}(v, s)$ & Counterfactual twin for $\mathbf{S} = s$\\
    $\ddot{\mathbf{v}}$ & Set of all twins\\
    $d_{\cX}$ & Metric on feature space\\
    $d_{\cY}$ & Metric on label space\\
    $h:\cX \rightarrow \cY$ & Classifier function\\
    $\text{CF}(v, \delta)$ & counterfactual for additive noise interventions\\
    $d: \cV \times \cV \rightarrow \mathbb{R}_{_{\geq 0}}$ & Causal fair metric\\
    $\delta_{|_{\cI}} = 0$ & Causal perturbation over non-sensitive part\\
    $\mathcal{B}(p)$ & Bernoulli distribution \\
    $\cQ$ & Semi-latent space\\
    $\varphi: \cV \rightarrow \cQ$ & embedding map from $v$ to the semi-latent space \\
    $\varphi^{-1}$ &  Inverse of embedding map $\varphi$\\
    $(\cQ_{i}, d_i)$ & metric space for each semi-latent space component\\
    $\cX$ & Non-sensitive part of semi-latent space\\
    $d_{\cX}$ & Metric that is define on $\cX$ \\
    $P_{\cX}$& Projects semi-latent spate to $\cX$\\
    $\varphi_{\cX} = P_{\cX}(\varphi)$& Combination of embedding and projection on $\cX$\\
    PCP & Protected causal perturbation\\
    $B^{\PCP}_{\Delta}(v)$ & PCP ball for instance $v$ with radii $\Delta$\\
    $B_{\Delta}^{\cX}$ & simple closed ball with radii $\Delta$ in $\cX$\\
    $B^{\CP}_{\Delta}$& Non-sensitive part of $B^{\PCP}_{\Delta}$ ball\\
    $\sigma_i$& activation function on layer $i$-th \\
    $\varphi_{_\W}$& Neural net embedding function\\
    $\W$ & Matrix parameters of neural net\\
    $\mathcal{W}$& Parameter space for tuples $\W$\\
    $\Phi$& Space of all embedding functions\\
    $\| W\|$ & Spectral norm on Matrix\\
    $\|W\|_{2,1}$ & Sum of the $\ell_2$ norms of each column in matrix $W$\\
    $\mathcal{R}(\Phi)$ &  Rademacher complexity of $\Phi$\\
    $\mathcal{D}$ & $\{(v_i, y_i)\}_{i = 1}^{n}$ set of observational data\\
    $\cP_{\mathcal{D}}$& Observational probability\\
    $\ell$ & Learning loss function\\
    $\mathcal{R}_\Delta$ & CAPIFY regularizer\\
    $\hat{\mathcal{R}}_\Delta$ & ECAPIFY regularizer\\
    $L_{\delta}$ & Huber loss function\\
    $[.]_{_+}$& Standard Hinge loss function\\
    Prop. & Proposition\\
    Fig. & Figure\\
    Eq. & Equation\\
\end{supertabular}

\appendix
\onecolumn
\subsection{Additional Background}
\begin{definition}[(Pseudo-) Metric Space] 
\label{def:pseudo}
A metric space $(X,d)$ is defined as a set $X$ accompanied by a non-negative real-valued function $d: X \times X \longrightarrow \mathbb{R}_{\geq 0}$, which is referred to as a metric. This metric function $d$ adheres to the subsequent properties for any $x, y, z \in X$:
\begin{itemize}
    \item \textbf{Non-negativity}: $d(x, y) \geq 0$, and $d(x, y) = 0$ if and only if $x = y$.
    \item \textbf{Symmetry}: $d(x, y) = d(y, x)$.
    \item \textbf{Triangle inequality}: $d(x, z) \leq d(x, y) + d(y, z)$.
\end{itemize}
When the positivity condition, i.e., $d(x, y) = 0$ if and only if $x = y$ is relaxed, the $d$ is called pseudometric (or semi-metric).
\end{definition}

\begin{definition}[The pull-back $\And$ push-forward metric]
    let $f:\cU \rightarrow \cV$ be a mapping between the metric spaces $(\cU, d_{\cU})$ and $(\cV, d_{\cV})$.
    The push-forward metric $d$ induced by the function $f$ is defined as:
    $$d(u_1,u_2) = d_{\cV}(f(u_1),f(u_2)); \quad u_1,u_2 \in \cU$$
    Similarly, the pull-back metric on the space $\cU$ is defined as:
    $$d(v_1, v_2) = d_{\cU}(f^{-1}(v_1),f^{-1}(v_2)); \quad v_1,v_2 \in \cV$$

These definitions allow us to relate distances in $\mathcal{U}$ and $\mathcal{V}$ via the mapping $f$ and its inverse $f^{-1}$.
\end{definition}

\begin{definition}[Huber Loss]
\label{def:huber}
For a given predicted value $\hat{y}$ and true target value y, the Huber loss function is defined as:

\[
L_{\delta}(\hat{y}, y) = \begin{cases}
\frac{1}{2}(\hat{y} - y)^2, & \text{if } |\hat{y} - y| \leq \delta \\
\delta|\hat{y} - y| - \frac{1}{2}\delta^2, & \text{otherwise}
\end{cases}
\]

where \(\hat{y}\) is the predicted value, \(y\) is the true target value, and \(\delta\) is a positive constant that determines the threshold for switching from quadratic loss (\(L2\)) to linear loss (\(L1\)).
\end{definition}

\subsection{Proofs}
\subsection*{Proposition~\ref{pr:metric}.}

Let's consider a causal fair metric denoted as $d: \mathcal{V} \times \mathcal{V} \rightarrow \mathbb{R}$, with an associated embedding $\varphi: \mathcal{V} \rightarrow \mathcal{Q}$, mapping from the feature space to a semi-latent space. We define $d^*$ as the pull-back metric of $d$ onto $\mathcal{Q}$:

\begin{equation*}
    d^*(q_1,q_2) = d(\varphi^{-1}(q_1), \varphi^{-1}(q_2))
\end{equation*}
$d^*$ possesses metric properties, and we aim to elucidate which properties it inherits from Definition~\ref{def:cfm}.
We consider a decomposition of $\mathcal{Q}$ into $\mathcal{S} \times \mathcal{X}$, and let $q = \varphi^{-1}(v)$, where $v \in \mathcal{V}$. Utilizing this decomposition, we express $q$ as $(s, x)$. Property $(i)$ of the causal fair metric implies:

\begin{equation*}
d(v, \ddot{v}_{s'}) = d^*((s, x), (s', x))=0 \quad \quad \forall s' \in \cS
\end{equation*}
This property implies that $d^*$ is invariant to the sensitive part $\mathcal{S}$. To demonstrate this, we assert that for any two points $q_1 = (s_1, x_1)$ and $q_2 = (s_2, x_2)$, along with an arbitrary $s_0 \in \mathcal{S}$, the following equality holds:
$$d^*((s_1, x_1), (s_2, x_2)) = d^*((s_0, x_1), (s_0, x_2))$$
By utilizing the triangle property of $d^*$, we can establish:
\begin{equation*}
\begin{aligned}
    d^*((s_1, x_1), &(s_2, x_2))    \leq   d^*((s_0, x_1), (s_2, x_2)) + 
    \cancelto{0}{d^*((s_1, x_1), (s_0, x_1))} \Longrightarrow \\
    &d^*((s_1, x_1), (s_2, x_2)) \leq  d^*((s_0, x_1), (s_2, x_2))
\end{aligned}
\end{equation*}
The distance $d^*((s_1, x_1), (s_0, x_1))$ is zero due to the first property. Similarly, it can be shown that:
\begin{equation*}
\begin{aligned}
    d^*((s_0, x_1), & (s_2, x_2))    \leq   d^*((s_1, x_1), (s_2, x_2)) + 
    \cancelto{0}{d^*((s_1, x_1), (s_0, x_1))} \Longrightarrow \\
    &d^*((s_0, x_1), (s_2, x_2)) \leq  d^*((s_1, x_1), (s_2, x_2))
\end{aligned}
\end{equation*}
it concludes that:
$$d^*((s_0, x_1), (s_2, x_2)) = d^*((s_1, x_1), (s_2, x_2))$$
similarly, we can show:
$$d^*((s_0, x_1), (s_2, x_2)) = d^*((s_0, x_1), (s_0, x_2))$$
This last equation implies that $d^*$ is invariant to the sensitive subspace. If we consider $d_{\mathcal{X}}$ as the induced metric of $d^*$ on the sensitive subspace $\mathcal{X}$, then we can express:
$$d^*((s_1, x_1), (s_2, x_2)) = d_{\mathcal{X}}(x_1, x_2)$$

The second property of Definition~\ref{def:cfm} can be expressed in a simplified form based on $d_{\mathcal{X}}$. It implies that for every $x \in \mathcal{X}$, the distance $d_{\mathcal{X}}(x, x+\delta)$, where $\delta \in \mathbb{R}^{\mathrm{dim}(\mathcal{X})}$, is continuous with respect to $\delta$. This continuity implies that $d_{\mathcal{X}}$ is continuous along each component on its diagonal, i.e., $(x, x)$.

Finally, if we replace $x$ with $P_{\mathcal{X}}(\varphi(v))$, where $P_{\mathcal{X}}$ is the projection operator onto the subspace $\mathcal{X}$ within $\mathcal{Q}$, we obtain:
$$d(v,w) =  d_{\mathcal{X}}(P_{\mathcal{X}}(\varphi(v)), P_{\mathcal{X}}(\varphi(w)))$$
This equation completes the proof.

\subsection*{Proposition~\ref{pr:ballShape}.}
The proof is straightforward when we write out the definitions. Let $\varphi(v) = (s, x)$ represent the embedding of the variable $v$ in the semi-latent space.
To begin, we can demonstrate how the semi-latent space enables us to describe the counterfactual of instance $v$ concerning the hard action $do(\bS \hi s')$ as follows:
\begin{equation*}
\label{eq:sq}
\begin{aligned}
\varphi^{-1}(\varphi(v)\odot_{I} s')  =  \varphi^{-1}((s, x)\odot_{I} s')  =   \varphi^{-1}((s', x)) = 
\mathbf{CF}(v, do(\bS \hi s')) = \ddot{v}_{s'}
\end{aligned}
\end{equation*}
In the above Equation, we use the symbol $v\odot_{I} \theta$ to represent a masking operator that modifies the values of the entries corresponding to set $I$ in vector $v$ by replacing them with $\theta$. The validity of the last line in Equation \ref{eq:sq} is based on the definition of the semi-latent space embedding.

By the definition \ref{def:cap}, the $B^{\PCP}_{\Delta}(v)$ is equal to:

\begin{equation*}
\begin{aligned}
 B^{\PCP}_{\Delta}&(v)   = \{ v' \in \mathcal{V} :  d(v,v')\leq \Delta\} = 
\{ v' \in \mathcal{V} : d_{\cX}(P_{\cX}(\varphi(v)), P_{\cX}(\varphi(v')))\leq \Delta\} =  \\  
& \{ v' \in \mathcal{V} : d_{\cX}(x, x')\leq \Delta\} =  
 \bigcup_{s \in \mathcal{S}} \{ v' \in \mathcal{V} :  \varphi(v') = (s, x') \land d_{\cX}(x, x')\leq \Delta\} =   \\
& \bigcup_{s \in \mathcal{S}} \{ v' \in \mathcal{V} :  P_{\cX}^{\perp}(\varphi(v')) = s \ \land \ d_{\cX}(\varphi_{\cX}(\ddot{v}_s), \varphi_{\cX}(v'))\leq \Delta\} = \\ &  
 \bigcup_{s \in \mathcal{S}} \{ v' \in \mathcal{V} :  P_{\cX}^{\perp}(\varphi(\ddot{v}_s)) = P_{\cX}^{\perp}(\varphi(v')) \ \land \  
\varphi_{\cX}(v') \in B_{\Delta}^{\cX}(\varphi_{\cX}(\ddot{v}_s)\} = \\ & \bigcup_{s \in \mathcal{S}} B^{\CP}_{\Delta}(\ddot{v}_s) \\
\end{aligned}
\end{equation*}    
The last equation completes the proof.
\subsection*{Proposition~\ref{pr:twinDef}.}
To present the result, we must first prove the following lemma:
\begin{lemma}
Let $d$ be a causal metric, and let $d_{\cX}$ be the corresponding embedding metric on the non-sensitive part of the exogenous space. For the closed ball $B_{\Delta}^{\cX}$, we have:
$$\lim_{\Delta \rightarrow 0} B_{\Delta}^{\cX}(x) = x$$
\end{lemma}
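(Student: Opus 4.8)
The plan is to unwind the definitions and reduce the claim to the continuity property guaranteed by Proposition~\ref{pr:metric}. Recall that the closed ball is $B_{\Delta}^{\cX}(x) = \{x' \in \cX : d_{\cX}(x,x') \leq \Delta\}$, and that $d_{\cX}$ is a metric on $\cX$ (the non-sensitive exogenous subspace) which, by Proposition~\ref{pr:metric}, is continuous along its diagonal in each component. First I would observe that the nested family $\{B_{\Delta}^{\cX}(x)\}_{\Delta > 0}$ is decreasing in $\Delta$, so the limit as $\Delta \to 0$ is exactly the intersection $\bigcap_{\Delta > 0} B_{\Delta}^{\cX}(x) = \{x' \in \cX : d_{\cX}(x,x') = 0\}$. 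Hence the lemma is equivalent to the statement that $d_{\cX}(x,x') = 0$ forces $x' = x$ — i.e. that $d_{\cX}$ is in fact a genuine (positive) metric on $\cX$, not merely a pseudometric.

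Next I would establish this positivity. Here is where the continuity-along-the-diagonal property does the work: writing $x' = x + \delta$, we know $\delta \mapsto d_{\cX}(x, x+\delta)$ is continuous at $\delta = 0$ with $d_{\cX}(x,x) = 0$. To get positivity we additionally need that $d_{\cX}$ separates points, which comes from how $d_{\cX}$ was constructed — as a product metric $d_{\cX}(x,x') = \sqrt{\sum_i d_i(x_i,x_i')^2}$ over the independent components of the semi-latent space, where each $d_i$ is an honest metric on the $i$-th coordinate. Thus $d_{\cX}(x,x') = 0$ iff $d_i(x_i,x_i') = 0$ for every $i$ iff $x_i = x_i'$ for every $i$ iff $x = x'$. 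Combining this with the intersection computation above yields $\lim_{\Delta \to 0} B_{\Delta}^{\cX}(x) = \{x\}$.

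The main obstacle is conceptual rather than computational: one must be careful about what ``$\lim_{\Delta \to 0} B_{\Delta}^{\cX}(x)$'' means and in what sense the set-valued limit exists. The cleanest route is to interpret it as the intersection over all positive radii (equivalently, the limit of the monotone decreasing family in the Hausdorff or set-inclusion sense), which makes the argument a one-line set identity once positivity of $d_{\cX}$ is in hand. The only genuine content is verifying that $d_{\cX}$ is positive-definite rather than just a pseudometric, and for that I would lean on the product-metric construction of the semi-latent metric together with the diagonal-continuity clause of Proposition~\ref{pr:metric}; no heavy machinery is required. Once the lemma is proved, Proposition~\ref{pr:twinDef} follows by applying $\varphi_{\cX}^{-1}$ (more precisely, by the decomposition in Proposition~\ref{pr:ballShape}, which writes $B^{\PCP}_{\Delta}(v) = \bigcup_{s \in \cS} B^{\CP}_{\Delta}(\ddot{v}_s)$ and identifies each $B^{\CP}_{\Delta}$ with a ball $B_{\Delta}^{\cX}$), so that shrinking $\Delta$ collapses each causal ball to its center $\ddot{v}_s$ and leaves exactly the twin set $\ddot{\V}$.
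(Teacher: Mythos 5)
Your reduction is the right one and matches the paper's in spirit: interpret $\lim_{\Delta\to 0}B_{\Delta}^{\cX}(x)$ as the nested intersection $\{x': d_{\cX}(x,x')=0\}$, so the whole lemma collapses to showing that $d_{\cX}$ separates points. Where you diverge from the paper is in how you justify that positivity. You appeal to the product-metric construction $d_{\cX}(x,x')=\sqrt{\sum_i d_i(x_i,x_i')^2}$ with each $d_i$ an honest metric; but that construction is only one way the paper builds such metrics, and neither the lemma's hypotheses nor Proposition~\ref{pr:metric} guarantee that $d_{\cX}$ has this form (Proposition~\ref{pr:metric} only gives the factorization through $\varphi_{\cX}$ plus diagonal continuity, and the diagonal-continuity clause you also invoke contributes nothing toward positivity). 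The paper instead pulls $x'$ back to the feature space, setting $v'=\varphi^{-1}((s,x'))$, observes that $x'\neq x$ means $v'$ is \emph{not} a counterfactual twin of $v$, and derives the contradiction directly from the first axiom of Definition~\ref{def:cfm} (the metric is zero \emph{only} for twin pairs). That argument works for an arbitrary causal fair metric and is the justification you should substitute for the product-construction step; with that replacement your proof is complete, and your closing remarks about deducing Proposition~\ref{pr:twinDef} via Proposition~\ref{pr:ballShape} track the paper exactly.
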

\begin{proof}
We establish the aforementioned lemma through a proof by contradiction. Let us assume that there exists another point, denoted as $x' \neq x$, within the set $\lim_{\Delta \rightarrow 0} B_{\Delta}^{\cX}(x)$. Consequently, we have $d_{\cX}(x, x') = 0$. If we consider $v' = \varphi^{-1}((s, x'))$, then for $v'$, we have $d(v, v') = 0$, since $v' \notin \{\ddot{v}_s\}$. However, this contradicts the property inherent to one of the causal fair metrics.
\end{proof}
By utilizing the above lemma and Prop.~\ref{pr:ballShape}, we can represent the result as follows:
\begin{equation*}
\begin{aligned}
    &B^{\PCP}_{0}(v) = 
    \lim_{\Delta \rightarrow 0} B^{\PCP}_{\Delta}(v) = 
    \lim_{\Delta \rightarrow 0} \bigcup_{s \in \mathcal{S}} B^{\CP}_{\Delta}(\ddot{v}_s) = 
    \bigcup_{s \in \cS}  \lim_{\Delta \rightarrow 0}  B^{\CP}_{\Delta}(\ddot{v}_s)  = \\ 
    & \bigcup_{s \in \cS}  \lim_{\Delta \rightarrow 0} \{ v' \in \mathcal{V} : \varphi(v') = (s',x') \land s'= s \ \land \ x' \in B_{\Delta}^{\cX}(x) \} = \\
    & \bigcup_{s \in \cS}  \{ v' \in \mathcal{V} : \varphi(v') = (s',x') \land s'= s \ \land \ x' \in \lim_{\Delta \rightarrow 0} B_{\Delta}^{\cX}(x) \} = \\
    & \bigcup_{s \in \cS}  \{ v' \in \mathcal{V} : \varphi(v') = (s,x) \} = 
    \bigcup_{s \in \mathcal{S}} \ddot{v}_s  
\end{aligned}
\end{equation*}

%
%
\subsection{Synthetic Data Models}

In the \S~\ref{sec:CE}, we detail the structural equations employed to formulate the SCMs for both LIN and NLM models. The protected feature, denoted as $\bS$, and the non-sensitive variables represented by $\X_i$ are derived based on the subsequent structural equations:
\begin{itemize}
\item linear SCM (LIN): 
\begin{equation*}
\label{eq:lin_model}
\mathbb{F} = 
\begin{cases}
S := U_S, & 			U_S \sim \mathcal{B}(0.5) 	\\
X_1 := 2S + U_1, & 	U_1\sim \mathcal{N}(0,1)	\\
X_2 := S-X_1 + U_2, & 			U_2 \sim \mathcal{N}(0,1) 
\end{cases}
\end{equation*}
\item Non-linear Model (NLM)
\begin{equation*}
\mathbb{F} = 
\begin{cases}
S := U_S, & 			U_S \sim \mathcal{B}(0.5) 	\\
X_1 := 2S^2 + U_1, & 	U_1\sim \mathcal{N}(0,1)	\\
X_2 := S-X_1^2 + U_2, & 			U_2 \sim \mathcal{N}(0,1) 
\end{cases}
\end{equation*}
\end{itemize}
Where $\mathcal{B}(p)$ represents Bernoulli random variables characterized by a probability $p$, and $\mathcal{N}(\mu,\sigma^2)$ denotes normal random variables, which are defined by a mean of $\mu$ and a variance of $\sigma^2$.
%
%
%
\subsection{Real-World Data}
In our study, we employed the Adult \cite{kohavi1996uci} and COMPAS \cite{washington2018argue} datasets, constructing an SCM from the causal graph by \citet{nabi2018fair}. For the Adult dataset, we considered features like \textbf{sex}, \textbf{age}, and \textbf{education-num}, with sex as a sensitive attribute. For COMPAS, features included \textbf{age}, \textbf{race}, and \textbf{priors count}, with sex as the sensitive attribute.

\subsection{Hyperparameter Tuning}
In our experimental setup, we generated 10,000 samples for each SCM model. The data was divided into batches of 1,000, and the learning process spanned 100 epochs. The coefficient of the decorrelation regularizer was set to 0.1. Furthermore, in the contrastive label-based scenario, the margin was set equal to the radius of the experiment, while in the triplet-based scenario, the margin was set to zero to have more sensitivity for metric learning.

\subsection{Training Methods}

In our study, we train decision-making classifiers, denoted as \( h(x) \), using various training objectives:

\begin{itemize}
\item \textbf{Empirical Risk Minimization (ERM)}: Minimizes expected risk for classifier parameters \( \psi \), defined as:
  \[ \min_{\psi} {\mathbb{E}}_{(v,y) \sim \mathcal{P}_{\mathcal{D}}}[\ell(h_{\psi}(v), y)] \]

\item \textbf{Adversarial Learning (AL)}: Trains the model against adversarial perturbation:
  \[ \min_\psi {\mathbb{E}}_{(v,y) \sim \mathcal{P}_{\mathcal{D}}}[ \max_{\delta \in B_{\Delta}(v)} \ell(h_\psi(v+\delta),y)] \]

\item \textbf{CAPIFY}: Combines locally linear \cite{qin2019adversarial} method principles with known CAP as a perturbation attack:
  \[
  \min_{\psi} {\mathbb{E}}_{(v,y) \sim \mathcal{P}_{\mathcal{D}}}[\ell(h_{\psi}(x), y) + \mu_1 * \max_{s \in \mathcal{S}} \ell(h(\ddot{v}_s),y) + \mu_2 *\gamma(\Delta, v) + \mu_3 *\| \nabla^{\mathcal{X}}_{v} f(v)\|_* ]
  \]

\item \textbf{ECAPIFY}: Combines LLR method principles with CAP as a perturbation attack:
  \[
  \min_{\psi} {\mathbb{E}}_{(v,y) \sim \mathcal{P}_{\mathcal{D}}}[\ell(h_{\psi}(x), y) + \max_{s \in \mathcal{S}} \{  \mu_1 * \ell(h(\ddot{v}_s),y) + 
 \mu_2 *|\Delta^T .\nabla_{v} \ell(\ddot{v}_s,y)| + \mu_3* \hat{\gamma}_\Delta(\ddot{v}_s,y)\} ]
  \]

\end{itemize}
We utilize binary cross-entropy loss as our loss function \( \ell \).

\subsection{Metrics}

We use different metrics to evaluate trainers' performance in terms of accuracy, CAPI fairness~\cite{ehyaei2023causal}, counterfactual fairness, and adversarial robustness:

\begin{itemize}
    \item $\mathbf{Acc}$: Classifier accuracy, expressed as a percentage.
    \item $\mathbf{M}$: The Matthews Correlation Coefficient (MCC) for binary classification quality. It ranges from $-1$ (perfect inverse prediction) to $+1$ (perfect prediction), with $0$ indicating random prediction. Formula:
    $$\dfrac{(TP \times TN - FP \times FN)}{\sqrt{(TP + FP)(TP + FN)(TN + FP)(TN + FN)}}$$
    Where TP, TN, FP, and FN are True Positives, True Negatives, False Positives, and False Negatives, respectively.
    \item $\mathbf{MAE}$: Mean Absolute Error is calculated as \(\text{MAE} = \frac{1}{n}\sum_{i=1}^{n} |y_i - \hat{y}_i|\).
    \item  $\mathbf{RMSE}$: Root Mean Square Error with formulas: = \(\sqrt{\frac{1}{n} \sum_{i=1}^{n} (y_i - \hat{y}_i)^2}\).
    \item $\mathbf{Unfair Area}$: Proportion of data points within the unfair area of radius $\Delta$ as defined in \citet{ehyaei2023causal}.
    \item $\mathbf{Non-Robust Area}$: Fraction of non-robust data points to adversarial perturbation within radius $\Delta$, equivalent to the unfair area in the absence of a sensitive attribute.
    \item $\mathbf{Counterfactual Unfair Area}$: Percentage of data points showing counterfactual unfairness, analogous to the unfair area when perturbation radius is zero.
\end{itemize}

\subsection{Additional Numerical Results}
In the subsequent tables and figures, additional numerical analysis results are presented to support the assertions of this study. Their explanations can be found in $\S$~\ref{sec:CE}.

\begin{figure*}
\centering
\includegraphics[width=\textwidth]{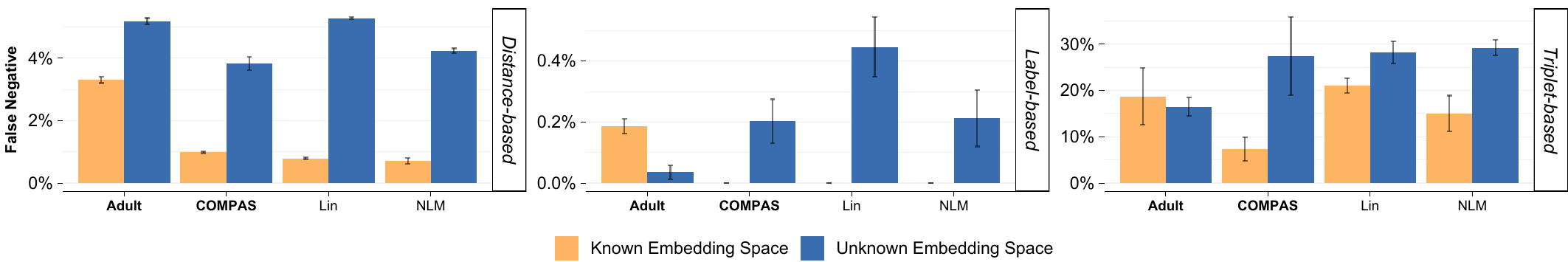}
\includegraphics[width=\textwidth]{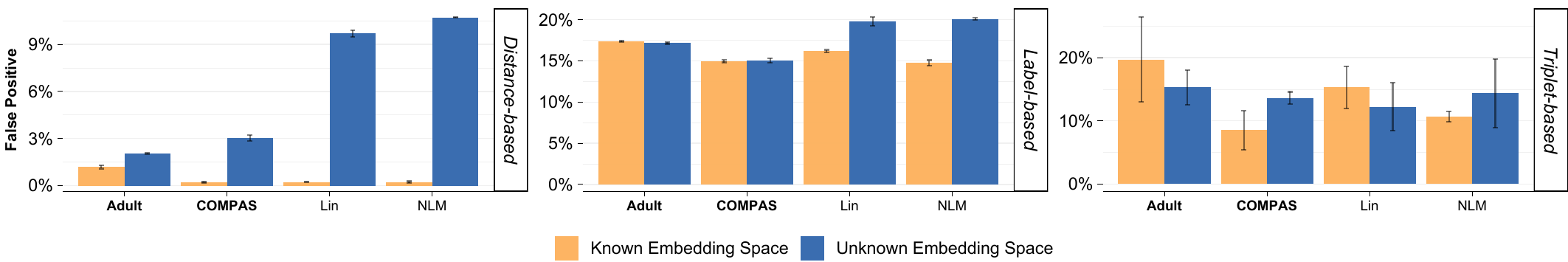}
\includegraphics[width=\textwidth]{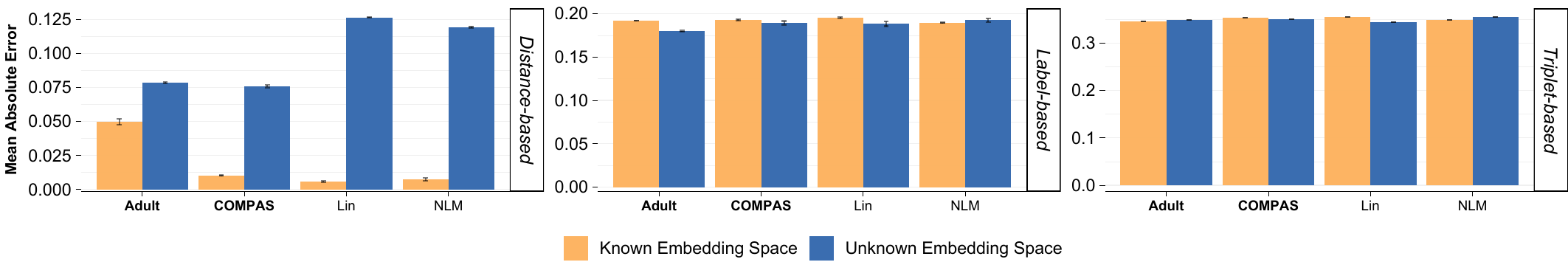}
\includegraphics[width=\textwidth]{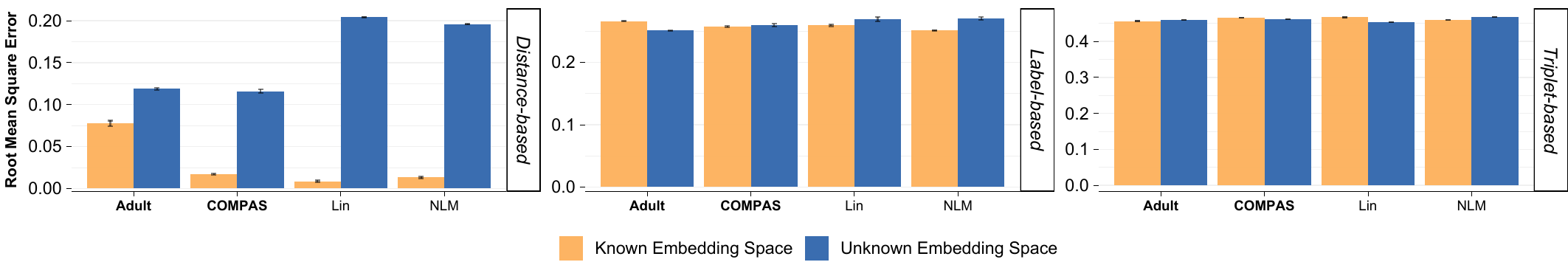}
\includegraphics[width=\textwidth]{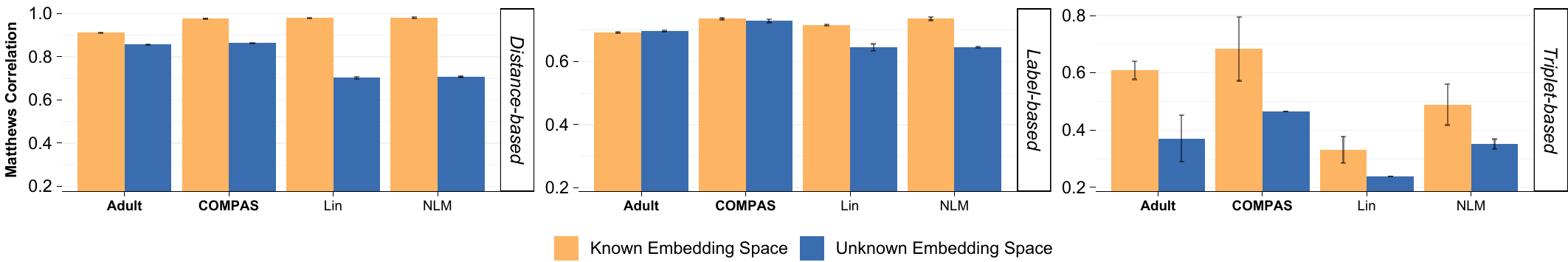}
\caption{The performance metric of the learning scenario with varying knowledge regarding the embedding layer size.}
\label{fig:compare_nets}
\end{figure*}

\begin{table*}
\fontsize{9}{10}\selectfont
\centering
\begin{tabular}[t]{llcc}
\toprule
& & \multicolumn{2}{c}{\textbf{Embedding Layer Dimension}} \\
\cmidrule(lr){3-4}  \\
Loss Function & Performance Metric & Known Embedding Space & Unknown Embedding Space\\
\midrule
\midrule
\multirow{6}{*}{Distance-based} & Accuracy $\uparrow$& \cellcolor[HTML]{90EE90} 0.983 {\scriptsize{$\pm$ 0.014}} & 0.882 {\scriptsize{$\pm$ 0.038}}\\
 & False Negative $\downarrow$& \cellcolor[HTML]{90EE90}0.013 {\scriptsize{$\pm$ 0.01}} & 0.043 {\scriptsize{$\pm$ 0.005}}\\
 & False Positive $\downarrow$& \cellcolor[HTML]{90EE90} 0.005 {\scriptsize{$\pm$ 0.004}} & 0.075 {\scriptsize{$\pm$ 0.039}}\\
 & Matthews Correlation $\uparrow$& \cellcolor[HTML]{90EE90} 0.965 {\scriptsize{$\pm$ 0.027}} & 0.766 {\scriptsize{$\pm$ 0.073}}\\
 & Mean Absolute Error $\downarrow$& \cellcolor[HTML]{90EE90} 0.016 {\scriptsize{$\pm$ 0.017}} & 0.105 {\scriptsize{$\pm$ 0.023}}\\
 & Root Mean Square Error $\downarrow$& \cellcolor[HTML]{90EE90} 0.025 {\scriptsize{$\pm$ 0.027}} & 0.17 {\scriptsize{$\pm$ 0.041}}\\
\midrule
\midrule
\multirow{6}{*}{Label-based} & Accuracy $\uparrow$&\cellcolor[HTML]{90EE90} 0.845 {\scriptsize{$\pm$ 0.012}} & 0.812 {\scriptsize{$\pm$ 0.021}}\\
 & False Negative $\downarrow$&\cellcolor[HTML]{90EE90} 0 {\scriptsize{$\pm$ 0.001}} & 0.003 {\scriptsize{$\pm$ 0.003}}\\
 & False Positive $\downarrow$&\cellcolor[HTML]{90EE90} 0.155 {\scriptsize{$\pm$ 0.012}} & 0.186 {\scriptsize{$\pm$ 0.019}}\\
 & Matthews Correlation $\uparrow$&\cellcolor[HTML]{90EE90} 0.725 {\scriptsize{$\pm$ 0.019}} & 0.67 {\scriptsize{$\pm$ 0.036}}\\
 & Mean Absolute Error $\downarrow$& 0.192 {\scriptsize{$\pm$ 0.003}} &\cellcolor[HTML]{90EE90} 0.19 {\scriptsize{$\pm$ 0.003}}\\
 & Root Mean Square Error $\downarrow$&\cellcolor[HTML]{90EE90} 0.257 {\scriptsize{$\pm$ 0.006}} & 0.266 {\scriptsize{$\pm$ 0.009}}\\
\midrule
\midrule
\multirow{6}{*}{Triplet-based} & Accuracy $\uparrow$&\cellcolor[HTML]{90EE90} 0.763 {\scriptsize{$\pm$ 0.08}} & 0.708 {\scriptsize{$\pm$ 0.088}}\\
 & False Negative $\downarrow$&\cellcolor[HTML]{90EE90} 0.226 {\scriptsize{$\pm$ 0.154}} & 0.292 {\scriptsize{$\pm$ 0.133}}\\
 & False Positive $\downarrow$& 0.109 {\scriptsize{$\pm$ 0.076}} &\cellcolor[HTML]{90EE90} 0.107 {\scriptsize{$\pm$ 0.063}}\\
 & Matthews Correlation $\uparrow$&\cellcolor[HTML]{90EE90} 0.529 {\scriptsize{$\pm$ 0.157}} & 0.415 {\scriptsize{$\pm$ 0.176}}\\
 & Mean Absolute Error $\downarrow$&\cellcolor[HTML]{90EE90} 0.351 {\scriptsize{$\pm$ 0.004}} & 0.35 {\scriptsize{$\pm$ 0.004}}\\
 & Root Mean Square Error $\downarrow$&\cellcolor[HTML]{90EE90} 0.463 {\scriptsize{$\pm$ 0.004}} & 0.462 {\scriptsize{$\pm$ 0.004}}\\
\bottomrule
\end{tabular}
    \caption{The table displays the average performance metrics for comparing scenarios with knowledge of embedding dimensions and their corresponding metrics against scenarios with no knowledge of the embedding space. Green cell highlights denote superior performance, while smaller values indicate the standard deviation of the estimations.}
    \label{tab:size_stat}
\end{table*}

\begin{table*}
\fontsize{9}{10}\selectfont
    \centering
    \begin{tabular}[t]{llccc}
    \toprule
& & \multicolumn{2}{c}{\textbf{Decorrelation Regularizer Function}} \\
\cmidrule(lr){3-4}  \\
Loss Function & Performance Metric & -  & XICOR \\
\midrule
\midrule
\multirow{6}{*}{Distance-based} & Accuracy $\uparrow$&\cellcolor[HTML]{90EE90} 0.984 {\scriptsize{$\pm$ 0.012}} & 0.983 {\scriptsize{$\pm$ 0.014}}\\
 & False Negative $\downarrow$&\cellcolor[HTML]{90EE90} 0.012 {\scriptsize{$\pm$ 0.008}} & 0.013 {\scriptsize{$\pm$ 0.01}}\\
 & False Positive $\downarrow$&\cellcolor[HTML]{90EE90} 0.004 {\scriptsize{$\pm$ 0.004}} & 0.005 {\scriptsize{$\pm$ 0.004}}\\
 & Matthews Correlation $\uparrow$&\cellcolor[HTML]{90EE90} 0.969 {\scriptsize{$\pm$ 0.023}} & 0.965 {\scriptsize{$\pm$ 0.027}}\\
 & Mean Absolute Error $\downarrow$&\cellcolor[HTML]{90EE90} 0.016 {\scriptsize{$\pm$ 0.017}} &\cellcolor[HTML]{90EE90} 0.016 {\scriptsize{$\pm$ 0.017}}\\
 & Root Mean Square Error $\downarrow$&\cellcolor[HTML]{90EE90} 0.024 {\scriptsize{$\pm$ 0.028}} & 0.025 {\scriptsize{$\pm$ 0.027}}\\
\midrule
\midrule
\multirow{6}{*}{Label-based}  & Accuracy $\uparrow$& 0.843 {\scriptsize{$\pm$ 0.017}} &\cellcolor[HTML]{90EE90} 0.845 {\scriptsize{$\pm$ 0.012}}\\
 & False Negative $\downarrow$&\cellcolor[HTML]{90EE90} 0 {\scriptsize{$\pm$ 0.001}} &\cellcolor[HTML]{90EE90} 0 {\scriptsize{$\pm$ 0.001}}\\
 & False Positive $\downarrow$& 0.156 {\scriptsize{$\pm$ 0.017}} &\cellcolor[HTML]{90EE90} 0.155 {\scriptsize{$\pm$ 0.012}}\\
 & Matthews Correlation $\uparrow$& 0.721 {\scriptsize{$\pm$ 0.028}} &\cellcolor[HTML]{90EE90} 0.725 {\scriptsize{$\pm$ 0.019}}\\
 & Mean Absolute Error $\downarrow$&\cellcolor[HTML]{90EE90} 0.19 {\scriptsize{$\pm$ 0.008}} & 0.192 {\scriptsize{$\pm$ 0.003}}\\
 & Root Mean Square Error $\downarrow$&\cellcolor[HTML]{90EE90} 0.256 {\scriptsize{$\pm$ 0.008}} & 0.257 {\scriptsize{$\pm$ 0.006}}\\
\midrule
\midrule
\multirow{6}{*}{Triplet-based} & Accuracy $\uparrow$& 0.669 {\scriptsize{$\pm$ 0.032}} &\cellcolor[HTML]{90EE90} 0.763 {\scriptsize{$\pm$ 0.08}}\\
 & False Negative $\downarrow$& 0.318 {\scriptsize{$\pm$ 0.146}} &\cellcolor[HTML]{90EE90} 0.226 {\scriptsize{$\pm$ 0.154}}\\
 & False Positive $\downarrow$& 0.117 {\scriptsize{$\pm$ 0.09}} &\cellcolor[HTML]{90EE90} 0.109 {\scriptsize{$\pm$ 0.076}}\\
 & Matthews Correlation $\downarrow$$\uparrow$& 0.339 {\scriptsize{$\pm$ 0.063}} &\cellcolor[HTML]{90EE90} 0.529 {\scriptsize{$\pm$ 0.157}}\\
 & Mean Absolute Error $\downarrow$&\cellcolor[HTML]{90EE90} 0.351 {\scriptsize{$\pm$ 0.002}} &\cellcolor[HTML]{90EE90} 0.351 {\scriptsize{$\pm$ 0.004}}\\
 & Root Mean Square Error $\downarrow$& 0.464 {\scriptsize{$\pm$ 0.003}} &\cellcolor[HTML]{90EE90} 0.463 {\scriptsize{$\pm$ 0.004}}\\
 \midrule
\end{tabular}
    \caption{The table displays average performance metrics for various scenarios, considering the presence of different decorrelation policies. Green cells highlight the best performance, while smaller values represent standard deviations of the estimates.}
    \label{tab:dec_Stat}
\end{table*}

\begin{table*}
\fontsize{9}{10}\selectfont
\centering
\begin{tabular}[t]{llcc}
\toprule
& & \multicolumn{2}{c}{\textbf{Network Layer}} \\
\cmidrule(lr){3-4}  \\
Loss Function & Performance Metric & CIFNet 14 Layers & CIFNet 5 Layers \\
\midrule
\midrule
\multirow{6}{*}{Distance-based} & Accuracy $\uparrow$& 0.846 {\scriptsize{$\pm$ 0.066}} & \cellcolor[HTML]{90EE90} 0.924 {\scriptsize{$\pm$ 0.091}}\\
 & False Negative $\downarrow$& 0.078 {\scriptsize{$\pm$ 0.039}} & \cellcolor[HTML]{90EE90} 0.047 {\scriptsize{$\pm$ 0.053}}\\
 & False Positive $\downarrow$& 0.076 {\scriptsize{$\pm$ 0.035}} &\cellcolor[HTML]{90EE90} 0.029 {\scriptsize{$\pm$ 0.039}}\\
 & Matthews Correlation $\uparrow$& 0.693 {\scriptsize{$\pm$ 0.131}} & \cellcolor[HTML]{90EE90} 0.849 {\scriptsize{$\pm$ 0.182}}\\
 & Mean Absolute Error $\downarrow$& 0.07 {\scriptsize{$\pm$ 0.049}} & \cellcolor[HTML]{90EE90}0.029 {\scriptsize{$\pm$ 0.038}}\\
 & Root Mean Square Error $\downarrow$& 0.104 {\scriptsize{$\pm$ 0.07}} &\cellcolor[HTML]{90EE90} 0.044 {\scriptsize{$\pm$ 0.056}}\\
\midrule
\midrule
\multirow{6}{*}{Label-based} & Accuracy $\uparrow$& 0.799 {\scriptsize{$\pm$ 0.028}} & \cellcolor[HTML]{90EE90} 0.819 {\scriptsize{$\pm$ 0.032}}\\
 & False Negative $\downarrow$& 0.008 {\scriptsize{$\pm$ 0.009}} & \cellcolor[HTML]{90EE90} 0.005 {\scriptsize{$\pm$ 0.012}}\\
 & False Positive $\downarrow$& 0.192 {\scriptsize{$\pm$ 0.027}} & \cellcolor[HTML]{90EE90} 0.176 {\scriptsize{$\pm$ 0.024}}\\
 & Matthews Correlation $\uparrow$& 0.644 {\scriptsize{$\pm$ 0.049}} &\cellcolor[HTML]{90EE90}  0.68 {\scriptsize{$\pm$ 0.06}}\\
 & Mean Absolute Error $\downarrow$& \cellcolor[HTML]{90EE90} 0.108 {\scriptsize{$\pm$ 0.055}} & 0.113 {\scriptsize{$\pm$ 0.06}}\\
 & Root Mean Square Error $\downarrow$& 0.154 {\scriptsize{$\pm$ 0.078}} & \cellcolor[HTML]{90EE90} 0.153 {\scriptsize{$\pm$ 0.081}}\\
\midrule
\midrule
\multirow{6}{*}{Triplet-based} & Accuracy $\uparrow$& 0.543 {\scriptsize{$\pm$ 0.032}} & \cellcolor[HTML]{90EE90} 0.642 {\scriptsize{$\pm$ 0.065}}\\
 & False Negative $\downarrow$& 0.306 {\scriptsize{$\pm$ 0.102}} & \cellcolor[HTML]{90EE90} 0.181 {\scriptsize{$\pm$ 0.038}}\\
 & False Positive $\downarrow$& \cellcolor[HTML]{90EE90} 0.151 {\scriptsize{$\pm$ 0.081}} & 0.177 {\scriptsize{$\pm$ 0.029}}\\
 & Matthews Correlation $\uparrow$& 0.091 {\scriptsize{$\pm$ 0.063}} & \cellcolor[HTML]{90EE90} 0.284 {\scriptsize{$\pm$ 0.13}}\\
 & Mean Absolute Error $\downarrow$& \cellcolor[HTML]{90EE90} 0.204 {\scriptsize{$\pm$ 0.109}} & 0.206 {\scriptsize{$\pm$ 0.104}}\\
 & Root Mean Square Error $\downarrow$& \cellcolor[HTML]{90EE90} 0.269 {\scriptsize{$\pm$ 0.144}} & 0.271 {\scriptsize{$\pm$ 0.138}}\\
\bottomrule
\end{tabular}
    \caption{In order to determine the optimal number of layers required for the best estimation of the embedding function, a comparison was conducted between two networks containing 5 and 14 layers, respectively.}
    \label{tab:layer_size}
\end{table*}

\end{document}